\newcommand{\given}{\,|\,}
\def\Figref#1{Figure~\ref{#1}}
\def\eqref#1{equation~\ref{#1}}
\def\Algref#1{Algorithm~\ref{#1}}
\def\1{\bm{1}}
\def\rvepsilon{{\boldsymbol{\epsilon}}}
\def\rvx{{\boldsymbol{x}}}
\def\vzero{{\bm{0}}}
\def\mI{{\bm{I}}}
\DeclareMathAlphabet{\mathsfit}{\encodingdefault}{\sfdefault}{m}{sl}
\SetMathAlphabet{\mathsfit}{bold}{\encodingdefault}{\sfdefault}{bx}{n}
\def\gL{{\mathcal{L}}}
\def\gN{{\mathcal{N}}}
\def\gY{{\mathcal{Y}}}
\newcommand{\E}{\mathbb{E}}
\newcommand{\KL}{D_{\mathrm{KL}}}
\definecolor{Gray}{gray}{0.93}
\newtheorem{prop}{Proposition}
\newcommand{\Tabref}[1]{Table~\ref{#1}}
\renewcommand{\eqref}[1]{Eqn.~(\ref{#1})}
\crefname{section}{Sec.}{Secs.}
\Crefname{section}{Section}{Sections}
\Crefname{table}{Table}{Tables}
\crefname{table}{Tab.}{Tabs.}
\title{Class-Balancing Diffusion Models}
\newcommand\blfootnote[1]{%
  \begingroup
  \renewcommand\thefootnote{}\footnote{#1}%
  \addtocounter{footnote}{-1}%
  \endgroup
}
\author{Yiming Qin$^1$
\and
Huangjie Zheng$^2$
\and
Jiangchao Yao$^{1,3}$\Envelope 
\and
Mingyuan Zhou$^2$
\and
Ya Zhang$^{1,3}$
}
\date{
    $^1$Cooperative Medianet Innovation Center, Shanghai Jiao-Tong University\\
    $^2$University of Texas, Austin\\
    $^3$Shanghai AI Laboratory
}
\begin{document}
\maketitle

\begin{abstract}

Diffusion-based models have shown the merits of generating high-quality visual data while preserving better diversity in recent studies. However, such observation is only justified with curated data distribution, where the data samples are nicely pre-processed to be uniformly distributed in terms of their labels. In practice, a long-tailed data distribution appears more common and how diffusion models perform on such class-imbalanced data remains unknown. In this work, we first investigate this problem and observe significant degradation in both diversity and fidelity when the diffusion model is trained on datasets with class-imbalanced distributions. Especially in tail classes, the generations largely lose diversity and we observe severe mode-collapse issues. To tackle this problem, we set from the hypothesis that the data distribution is not class-balanced, and propose Class-Balancing Diffusion Models (CBDM) that are trained with a distribution adjustment regularizer as a solution. Experiments show that images generated by CBDM exhibit higher diversity and quality in both quantitative and qualitative ways. Our method benchmarked the generation results on CIFAR100/CIFAR100LT dataset and shows outstanding performance on the downstream recognition task.

\end{abstract}

\blfootnote{The code is available at: \url{https://github.com/qym7/CBDM-pytorch}}

\section{Introduction}

\begin{figure}
    \centering
     \includegraphics[width=0.98\linewidth]{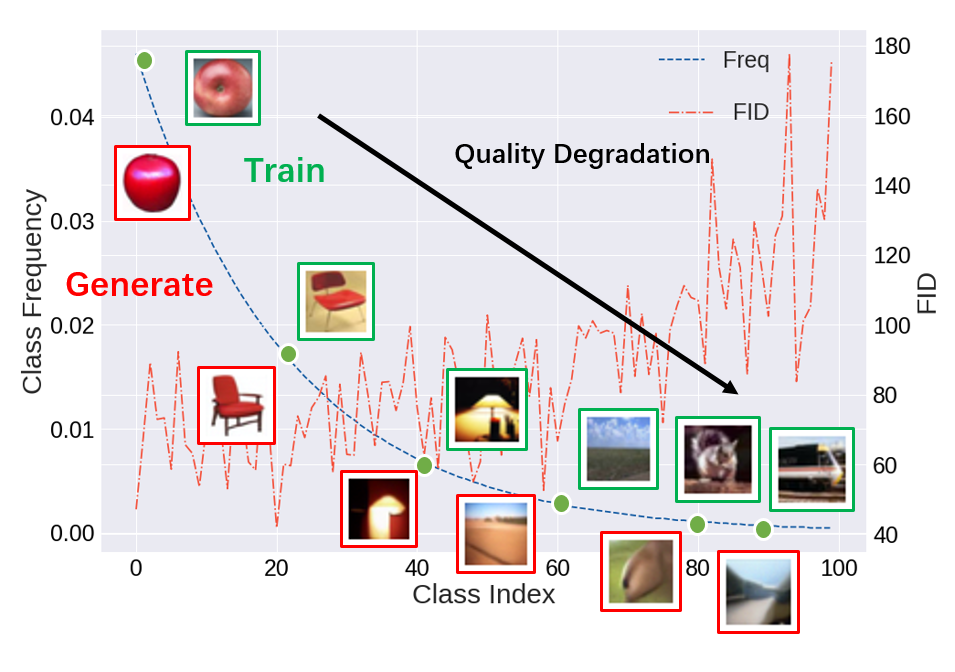}
    \caption{Generation degrades along with class frequency. Semantics of generated images become less recognizable when class frequency decreases, while the FID score increases significantly.}
    \label{fig:ltdataset}
\end{figure}

In recent years, log-likelihood-based diffusion models have evolved rapidly and established new benchmarks on a range of generation tasks~\cite{guided,bahdat2022lsgm}.
Based on them, researchers have been able to further control the model generation process and the generation quality. This improves the applications of generative models in numerous domains including text-image generation\cite{glide}, image editing\cite{saharia2022palette,meng2021sdedit}, speech synthesis\cite{diffspeech}, medical imaging\cite{song2022medical,luo2022medicalmri}, video generation\cite{videogeneration} and adversarial learning\cite{adversarialdiff,wu2022guidedpurification}, etc.

Although diffusion models are known for the power of high fidelity and diversity in generation, most of the existing diffusion models are trained with the hypothesis that the data are uniformly distributed \textit{w.r.t.} their labels. However, in the real world, the distribution is often very skewed. Especially for many domain-specific generation tasks such as medical images\cite{medicaldata}, fine-grained dataset for taxology\cite{inature} and data grabbed from the web\cite{webvision}, it is difficult to collect large amounts of data for each class equally, and the size of the training set for head and tail categories can differ by a factor of hundred or more. 
For such datasets, unconditional diffusion models tend to produce a significant portion of low-quality images. Conditional models, as shown in \Figref{fig:ltdataset}, generate head class images with satisfying performance, while conversely the generated images on tail classes are very likely to show unrecognizable semantics. 
Concerning training generative models with limited data, there already exist several methods\cite{GANlimited, AUGMDifferential, ganregjensen} based on GAN models\cite{brock2018large}. However, quite few studies examine the impact of imbalanced class distribution\cite{harsh2022ganlt} especially on diffusion models, which is practical yet under-explored.

Our work first introduces diffusion models to imbalance generation tasks on several long-tailed datasets\cite{cifar100lt}, and then build some straightforward baselines according to the common methods used in long-tailed recognition\cite{menon2021logitadjustment, mahajan2018sqrtros}. 
To overcome the potential degeneration induced by the skewed distribution, we propose a novel Class-Balancing Diffusion Model (CBDM). 
Theoretically, CBDM resorts to adjusting the conditional transfer probability during sampling in order to implicitly force generated images to have a balanced prior distribution during every sampling step. Technically, the adjusted transfer probability of CBDM results in an additional MSE-form loss for a conditional diffusion model, which functions as a regularizer. 
Intuitively, this loss augments the similarity of generated images conditioned on different classes, and turns out to be an effective approach to transfer common information from head classes to tail classes without hurting the model's expressiveness on head classes. CBDM can be implemented within several flines of codes, and its lighter version admits fine-tuning an existing conditional model. We conducted extensive experiments on CIFAR10/CIFAR100 and their corresponding long-tailed dataset to show the promise of CBDM over existing state-of-the-art methods. 
In a nutshell, the contributions of this work can be summarized as follows:

\begin{itemize}
    \item We identify the severe degeneration problem of diffusion models in long-tailed generation tasks and benchmark some straightforward baselines in this direction.
    \item We propose a new perspective to handle the generation quality collapse on tail classes, and derive a novel Class-Balancing Diffusion Model, which is effective and lightweight as a regularizer to existing methods.
    \item We validate that CBDM is capable of generating more diverse images with convincing fidelity, especially for datasets with large number of categories. In addition, CBDM is robust with accelerating algorithms such as DDIM\cite{ddim}, and can be transplanted to different conditional diffusion-based backbones easily.
\end{itemize}

\label{sec:intro}

\section{Related Works}

\begin{figure*}[ht]
    \centering
    \begin{subfigure}[b]{0.7\linewidth}
         \centering
         \includegraphics[width=0.99\linewidth]{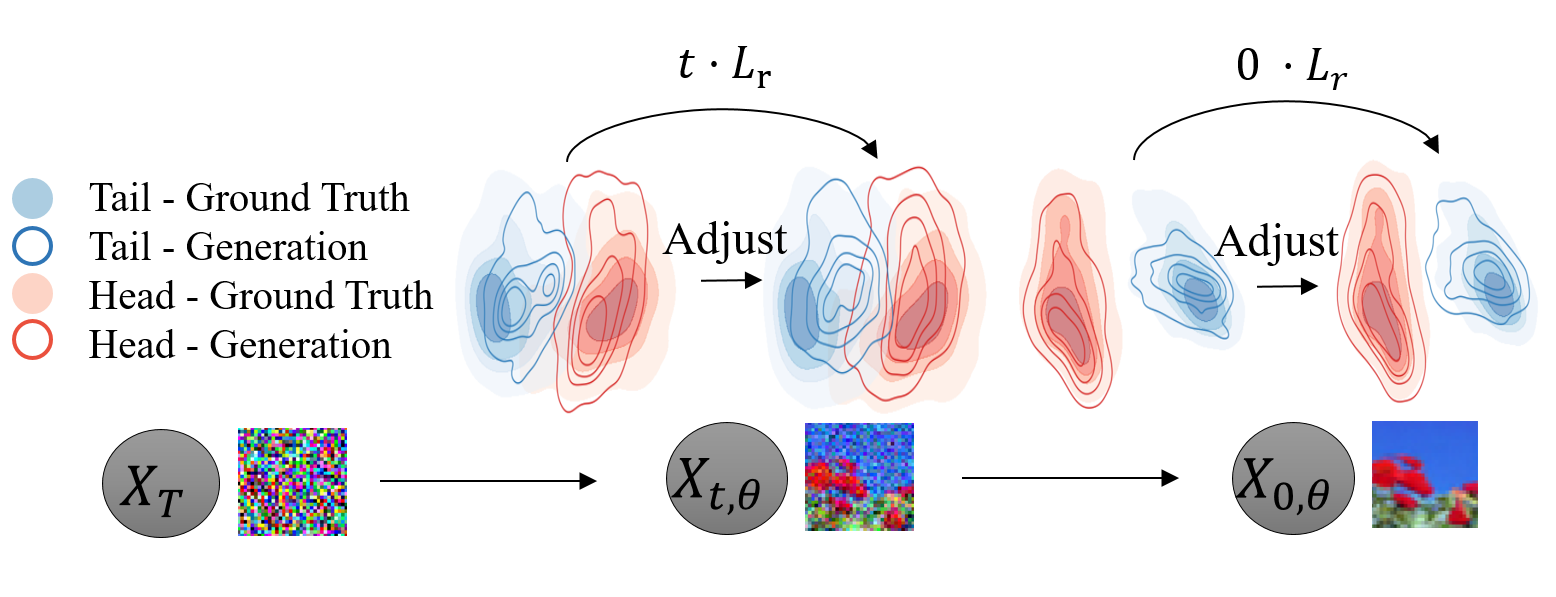}
         \caption{Principle overview of CBDM}
         \label{fig:observation-distribution}
    \end{subfigure}
    \hfill
    \begin{subfigure}[b]{0.29\linewidth}
         \centering
         \includegraphics[width=0.99\linewidth]{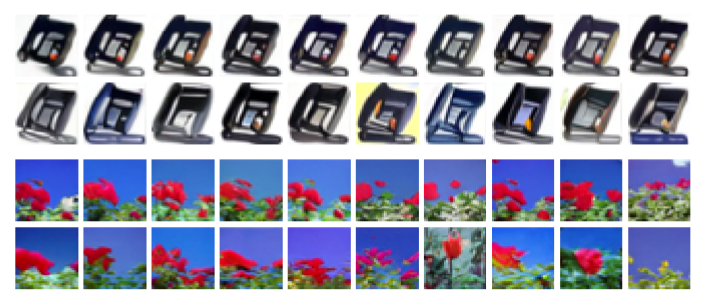}
         \caption{
         DDPM(top)/CBDM(bottom) comparison when denoising a same noised image in class 70/86. We use both generators to recover a noisy image of two classes and observe that CBDM is able to produce more diversity based on the same starting point.
        }
        \label{fig:observation-generation}
    \end{subfigure}
    \caption{Algorithm (\textit{left}) and generation (\textit{right}) visualization. In the \textit{left} figure, we show that an extra regularization loss $gL_r$ proportional to the diffusion step $t$ is added during training. This loss function pushes the sampling distribution (curves on the surface) to wider region while preventing it to be excessively distorted compared to the ground truth distribution (gradient color on the background). 
    }
    \label{fig:observation}
\end{figure*}

\paragraph{Diffusion models}
Diffusion models are recently proposed generative models\cite{nonequilibrium} based on non-equilibrium thermodynamics. 
Conditional diffusion models\cite{guided} encode label information into the generation process and improve largely the generation performance. The guidance structure proposed in \cite{guided} makes it possible to control the generation process through an external module. Based on a similar intuition, researchers arrive to realize diverse functions, such as guided adversarial purification \cite{wu2022guidedpurification}, few-shot generation\cite{giannone2022fewshot} and so on \cite{vikash2022lowdensity, rombach2022highreso}.
The drawback of classifier guidance (noted as CG)\cite{guided} lies in its requirement of training another auxiliary classifier. To address the issue, classifier-free guidance (noted as CFG)\cite{classifier-free} proposed a mechanism that uses the generator itself to express the class guidance information. CFG is proved to be a resource efficient method and achieves outstanding performance on large models\cite{glide}. Moreover, CFG only requires to add one line in training, which can be easily transplanted on different models.

\vspace{-13pt}
\paragraph{Long-tail recognition}
The problem of long-tailed distribution is a common dilemma in machine learning and have been widely explored in the area of discriminative models, \textit{i.e.,} long-tailed recognition. There mainly exist three paradigms in this domain, namely Class Re-balancing\cite{mahajan2018sqrtros, menon2021logitadjustment}, Information Augmentation\cite{yin2019ftl, li2021leapl, wang2019rsg} and Module Improvement\cite{lmle, hfl}. Among them, Class Re-balancing provides the best explainability, and its most common practice is re-sampling\cite{mahajan2018sqrtros}. Thereafter, stemming from modifying the objective function from the global error rate to the class average one, \cite{menon2021logitadjustment} propose logit adjustment which has shown an impressive performance. 
Another effective method is based on Information Augmentation, which uses head class feature information to augment tail classes \cite{li2021leapl, chu2020augmlt, wang2019rsg, yin2019ftl}.
However, discriminative models map data from higher to lower dimensions, while generative models map images from lower to higher dimensions. Thus, the mechanism of class rebalancing between them may be completely different and how to design the balancing method remains under-explored.

\vspace{-13pt} 
\paragraph{Generative models based on limited data} 
GAN\cite{brock2018large, deshpande2018generative} is the most dominant model in the field of image synthesis in recent years. Given the requirement of large-scale data to train the generative models, a part of researchers focus on improving the performance of GAN models under the small datasets. To address the overfitting issue of the discriminator, a number of regularization methods \cite{ ganregjensen,ganregconsis, 2021ganreglimited} has been proposed. An alternative solution is Data Augmentation. As augmentation information is very prone to leak to the generator, researchers proposed improved augmentation strategies such as Differentiable Augmentation (DiffAug)\cite{AUGMDifferential} and Adaptive Augmentation (ADA)\cite{GANlimited}. 
These augmentation ways can be transferred to diffusion models as baselines to help the generation of tail-class samples.

\label{sec:relwork}

\section{Method}
In this section, we first give the basic notations following the classical DDPM model, and then introduce the class-imbalanced generation setting. In the third part, we introduce our CBDM algorithm and present its training details including implementation and hyper-parameter settings.

\subsection{Preliminary}
Diffusion models leverage a pre-defined forward process in training, where a clean image distribution $q(\rvx_0)$ can be corrupted to a noisy distribution $q(\rvx_t|\rvx_0)$ at a specified timestep $t$. 
Given a pre-defined variance schedule $\{\beta_t\}_{1:T}$, the noisy distribution at any intermediate timestep is 
$$q(\rvx_t \given \rvx_0) = \gN(\sqrt{\bar \alpha_t} \rvx_0, (1-\bar \alpha_t)\mI);~~~ \bar{\alpha}_t\!\!=\!\!\prod_{i=1}^t (1-\beta_i).$$
To reverse such forward process, a generative model $\theta$ learns to estimate the analytical true posterior in order to recover $\rvx_{t-1}$ from $\rvx_t$ as follows:
$$\min_\theta \KL[q(\rvx_{t-1}|\rvx_t, \rvx_0)||p_{\theta}(\rvx_{t-1}|\rvx_t)];~~\forall t \in \{1, ..., T\},$$
and such an objective can be reduced to a simple denoising estimation loss\cite{ddpm}:
\begin{equation}\label{eq:mse}
\resizebox{\columnwidth}{!}{$\gL_\text{DDPM} \!=\! \E_{t, \rvx_0 \sim q(\rvx_0), \rvepsilon \sim \gN(\vzero, \mI)} \left[ \|\rvepsilon - \rvepsilon_{\theta}(\sqrt{\bar{\alpha}}\rvx_0+\sqrt{1-\bar{\alpha_t}}\rvepsilon, t)\|^2 \right]$}
\end{equation}
For the case where label information is available, the model is trained to estimate the noise as above in both conditional cases $\rvepsilon_{\theta}(\rvx_t, y, t)$ with data-label pairs $(\rvx_0,y)$ and unconditional case $\rvepsilon_{\theta}(\rvx_t, t)$. In the sampling, the label-guided model estimates the noise with a linear interpolation $\hat \rvepsilon = (1+\omega)\rvepsilon_{\theta}(\rvx_t, y, t) - \omega \rvepsilon_{\theta}(\rvx_t, t)$ to recover $\rvx_{t-1}$, which is often referred as Classifier-Free Guidance (CFG)\cite{classifier-free}.

\subsection{Class-Balancing Diffusion Models}
Current diffusion models assume the data distribution to be uniform in class, and thus equally treat samples in the training stage. However, based on our observation, such training strategy leads to degradation in generation quality.
Below, we provide an analysis that motivates our Class-Balancing Diffusion Models (CBDM). 

{Suppose $q(\rvx, y)$ is the data distribution that we need to match with the joint distribution $p_\theta(\rvx, y)$ predicted by a generative model. }
We analyze their difference from the density ratio $r = \frac{q(\rvx, y)}{p_\theta(\rvx, y)} = \frac{q(\rvx \given y)}{p_\theta(\rvx \given y)} \cdot \frac{q(y)}{p_\theta(y)}$. 
When the true label distribution $q(y)$ is the same as the prior $p_\theta(y)$, which is usually assumed to be uniform, the density ratio $r$ is reduced the conditional term to learn a conditional model $p_\theta(\rvx \given y)$. However, when such a hypothesis is violated, for head classes, $\frac{q(y)}{p_\theta(y)}$ would result in a larger weight that makes the model biased and hurt tail classes, and vice versa. 
Empirically, we observe that the generation degrades more on tail classes, as illustrated in \Figref{fig:ltdataset}. Moreover, as shown in \Figref{fig:observation-distribution}, compared to head classes, DDPM cannot well capture the tail-class data distribution and the mode is poorly covered during the sampling process.
As a result, generations of tail classes often have poor quality and diversity, shown in \Figref{fig:observation-generation}. 

To tackle this issue, the most intuitive approach lies in adjusting the prior label distribution through a class balanced re-sampling. However, such abrupt adjustment easily leads to negative improvement in experiments. The step-by-step sampling nature of diffusion models provides another aspect to adjust this distribution more softly.
In this spirit, we propose to calibrate the learning process through the conditional transfer probability $p_\theta\left(\rvx_{t-1}\middle|\rvx_t,y\right)$ when there exists a gap between the class distribution and the prior. 

Let $p_\theta^{\star}\left(\rvx_{t-1}\middle|\rvx_t,y\right)$ be the optimum trained in the case that $\frac{q(y)}{p_\theta(y)}$ is correctly estimated, and
$p_\theta\left(\rvx_{t-1}\middle|\rvx_t,y\right)$ be the one trained in a class-imbalanced case. The relation between such two generative distributions can be described as the following proposition.

\begin{prop}
When training a diffusion model parameterized with $\theta$ on a class-imbalanced dataset, 
its conditional reverse distribution $p_\theta\left(\rvx_{t-1}\middle|\rvx_t,y\right)$ can be corrected with an adjustment schema:
    \begin{align}
        p_{\theta}^{\star}(\rvx_{t-1}|\rvx_{t}, y) = p_{\theta}(\rvx_{t-1}|\rvx_{t}, y) \frac{p_{\theta}(\rvx_{t-1})}{p_{\theta}^{\star}(\rvx_{t-1})}
        \frac{q^{\star}(\rvx_{t})}{q(\rvx_{t})}
    \end{align}
    \label{theo:adj}
\end{prop}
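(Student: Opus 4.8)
\medskip
\noindent\emph{Proof strategy.}
The plan is to obtain the identity by repeated use of Bayes' rule on the reverse‑time chains, exploiting two structural facts about diffusion under class re‑balancing. Write $q$ for the imbalanced data distribution and $q^\star$ for its class‑balanced counterpart, so that $q^\star(\rvx_0\mid y)=q(\rvx_0\mid y)$ while $q^\star(y)$ is uniform, and let $p_\theta,p_\theta^\star$ be the models fit to match $q$ and $q^\star$ respectively. Fact (i): the forward noising kernel $q(\rvx_t\mid\rvx_{t-1})$ is prescribed and independent of $y$, hence identical under $q$ and $q^\star$. Fact (ii): since re‑balancing changes only how often a class is drawn, not what its images look like, pushing $q^\star(\rvx_0\mid y)=q(\rvx_0\mid y)$ through the fixed kernel gives $q^\star(\rvx_t\mid y)=q(\rvx_t\mid y)$ for every $t$ (a one‑line induction on $t$ using (i)).

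The main computation is to expand both reverse transitions and take their ratio. Using that $\rvx_t$ is conditionally independent of $y$ given $\rvx_{t-1}$ in the reverse chain,
\[
q^\star(\rvx_{t-1}\mid\rvx_t,y)=q^\star(\rvx_{t-1}\mid\rvx_t)\,\frac{q^\star(y\mid\rvx_{t-1})}{q^\star(y\mid\rvx_t)},
\]
and likewise for $q(\rvx_{t-1}\mid\rvx_t,y)$. Rewriting each classifier term via $q^\star(y\mid\rvx_s)=q(\rvx_s\mid y)\,q^\star(y)/q^\star(\rvx_s)$ together with Fact (ii) turns every factor $q^\star(y\mid\rvx_s)/q(y\mid\rvx_s)$ into $\bigl(q^\star(y)/q(y)\bigr)\bigl(q(\rvx_s)/q^\star(\rvx_s)\bigr)$; the explicit label‑prior factors at $s=t-1$ and $s=t$ cancel against one another, leaving
\[
\frac{q^\star(\rvx_{t-1}\mid\rvx_t,y)}{q(\rvx_{t-1}\mid\rvx_t,y)}
=\frac{q^\star(\rvx_{t-1}\mid\rvx_t)}{q(\rvx_{t-1}\mid\rvx_t)}\cdot\frac{q(\rvx_{t-1})}{q^\star(\rvx_{t-1})}\cdot\frac{q^\star(\rvx_t)}{q(\rvx_t)}.
\]

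Finally I would pass from data quantities to the trained models. The optimality assumptions identify $p_\theta^\star(\rvx_{t-1}\mid\rvx_t,y)$ with $q^\star(\rvx_{t-1}\mid\rvx_t,y)$ and $p_\theta(\rvx_{t-1}\mid\rvx_t,y)$ with $q(\rvx_{t-1}\mid\rvx_t,y)$, the unconditional marginals with $p_\theta(\rvx_{t-1})=q(\rvx_{t-1})$ and $p_\theta^\star(\rvx_{t-1})=q^\star(\rvx_{t-1})$, and — since the adjustment acts only on the conditional branch of the sampler — the unconditional reverse kernel is taken to be shared, $p_\theta^\star(\rvx_{t-1}\mid\rvx_t)=p_\theta(\rvx_{t-1}\mid\rvx_t)$, killing the first factor. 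Substituting gives $p_\theta^\star(\rvx_{t-1}\mid\rvx_t,y)=p_\theta(\rvx_{t-1}\mid\rvx_t,y)\,\frac{p_\theta(\rvx_{t-1})}{p_\theta^\star(\rvx_{t-1})}\,\frac{q^\star(\rvx_t)}{q(\rvx_t)}$, which is the claim; both sides are normalized densities in $\rvx_{t-1}$, so no stray constant survives.

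I expect the real obstacle to be the bookkeeping in the last two displays rather than any single hard estimate: the statement deliberately mixes a \emph{model} marginal at step $t-1$ with a \emph{data} marginal at step $t$, so one must be explicit about which identification is invoked where, and in particular about the modeling assumption that the corrected sampler reuses the same unconditional reverse kernel (equivalently, that the correction factors depend only on marginals, not on the transition itself). It is also worth being upfront that the proposition is non‑vacuous precisely because $p_\theta$ is an \emph{imperfect} fit to $q$ on the tail classes — for exactly optimal models every conditional reverse transition coincides — so the real content is the \emph{form} of the multiplicative correction that compensates the marginal mismatch induced by training on imbalanced data.
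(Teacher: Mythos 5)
Your proposal is correct and follows essentially the same route as the paper's proof: a Bayes decomposition of the conditional reverse transition into an unconditional reverse kernel times a classifier ratio, the invariance of the class-conditionals $q(\rvx_t\mid y)$ under re-balancing, cancellation of the label priors, and then identification of the remaining data quantities with model marginals at step $t-1$ while keeping data marginals at step $t$. Your explicit flagging of the shared unconditional reverse kernel and of the model/data mixing is exactly what the paper does implicitly by writing $q(\rvx_{t-1}\mid\rvx_t)$ unstarred in both expansions and by approximating $\hat{q}(\rvx_{t-1})$ with $p_\theta(\rvx_{t-1})$, so no substantive difference remains.
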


The proposition above shows that, when trained on a class-imbalanced dataset, a diffusion model can still approach the true data distribution by applying a distribution adjustment schema 
$\frac{p_{\theta}(\rvx_{t-1})}{p_{\theta}^{\star}(\rvx_{t-1})}\frac{q^{\star}(\rvx_{t})}{q(\rvx_{t})}$
at every reverse step $t$. 
However, approximating this schema is not feasible at every sampling step, so CBDM incorporates it into the training loss function to achieve an equivalent objective,
and thus gets rid of the model-free part $\frac{q^{\star}(\rvx_{t})}{q(\rvx_{t})}$. 
By further decomposing $p_{\theta}(\rvx_{t-1})$ and $p^\star_{\theta}(\rvx_{t-1})$ to the expectation of the conditional probability $p^\star_{\theta}(\rvx_{t-1}|\rvx_{t:T}, y)$, we present a upper bound to approximate this probability in Proposition \ref{prop:loss}.

\begin{prop}
For the adjusted loss $\gL_{DM}^\star = \sum_{t=1}^T \gL_{t-1}^\star$, an upper-bound of the target training objective to calibrate at timestep $t$ (i.e. $\gL_{t-1}^\star$) can be derived as:
\begin{align}
\sum_{t\geq1}\gL_{t-1}^{\star}& = \sum_{t\geq1}\KL[q(\rvx_{t-1}| \rvx_t, \rvx_0)\ ||\ p_{\theta}^\star(\rvx_{t-1}|\rvx_{t}, y)] \notag \\
&\leq\sum_{t\geq1} (\underbrace{\KL[q(\rvx_{t-1}| \rvx_t, \rvx_0)\ ||\ p_{\theta}(\rvx_{t-1}|\rvx_{t}, y)]}_{\text{Diffusion model loss } \gL_\text{DM}} \notag\\ 
& + \underbrace{t  \E_{y'\sim q^\star_\mathcal{Y}}[ \KL[p_{\theta}(\rvx_{t-1}|\rvx_{t})||p_{\theta}(\rvx_{t-1}|\rvx_{t}, y')]}_{\text{Distribution adjustment loss } \gL_\text{r}} ]) \notag,
\end{align}
\label{prop:loss}
\end{prop}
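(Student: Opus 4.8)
The plan is to start from the per-step term $\gL_{t-1}^\star=\KL[q(\rvx_{t-1}|\rvx_t,\rvx_0)\,\|\,p_\theta^\star(\rvx_{t-1}|\rvx_t,y)]$, write it as an expectation (under $q$) of a log-ratio, and substitute the adjustment identity of Proposition~\ref{theo:adj}, $p_\theta^\star(\rvx_{t-1}|\rvx_t,y)=p_\theta(\rvx_{t-1}|\rvx_t,y)\,\tfrac{p_\theta(\rvx_{t-1})}{p_\theta^\star(\rvx_{t-1})}\,\tfrac{q^\star(\rvx_t)}{q(\rvx_t)}$. The logarithm then splits additively into three pieces: the ordinary DDPM term $\log\tfrac{q(\rvx_{t-1}|\rvx_t,\rvx_0)}{p_\theta(\rvx_{t-1}|\rvx_t,y)}$, a marginal-ratio term $\log\tfrac{p_\theta^\star(\rvx_{t-1})}{p_\theta(\rvx_{t-1})}$, and a model-free term $\log\tfrac{q(\rvx_t)}{q^\star(\rvx_t)}$. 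Under $\E_q$, the first piece is exactly $\gL_\text{DM}$ at step $t-1$; the third piece depends only on $\rvx_t$ and not on $\theta$, so it is dropped as explained in the main text, and the stated inequality should accordingly be read up to this $\theta$-independent additive term. It then remains to show $\sum_{t\ge1}\E_q[\log\tfrac{p_\theta^\star(\rvx_{t-1})}{p_\theta(\rvx_{t-1})}]\le\sum_{t\ge1}\gL_\text{r}$.

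For the marginal-ratio term I would, following the hint in the text, expand both $p_\theta^\star(\rvx_{t-1})$ and $p_\theta(\rvx_{t-1})$ as expectations of the reverse-chain conditionals $p_\theta(\rvx_{t-1}|\rvx_{t:T},y)=p_\theta(\rvx_{t-1}|\rvx_t,y)$ (Markov property), the two differing only through the class distribution used ($q^\star_\mathcal{Y}$ versus the model's own prior) and through the per-step correction factors of Proposition~\ref{theo:adj}. Applying Jensen's inequality (equivalently $\log u\le u-1$, i.e. the log-sum inequality) to pull the expectation over $y'\sim q^\star_\mathcal{Y}$ and over the chain variables outside the logarithm, the log-ratio of the two marginals is bounded by a telescoping sum of single-step discrepancies, each of the form $\E_{y'\sim q^\star_\mathcal{Y}}\big[\KL[p_\theta(\rvx_{s-1}|\rvx_s)\,\|\,p_\theta(\rvx_{s-1}|\rvx_s,y')]\big]$ over the reverse steps $s=t,\dots,T$ used to produce $\rvx_{t-1}$. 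Hence $\E_q[\log\tfrac{p_\theta^\star(\rvx_{t-1})}{p_\theta(\rvx_{t-1})}]\le\sum_{s=t}^{T}\E_{y'\sim q^\star_\mathcal{Y}}[\KL[p_\theta(\rvx_{s-1}|\rvx_s)\,\|\,p_\theta(\rvx_{s-1}|\rvx_s,y')]]$.

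The weight $t$ in $\gL_\text{r}$ then appears simply by exchanging the two summations: the step-$s$ discrepancy is counted once for every $t\le s$, i.e.\ $s$ times, so $\sum_{t=1}^{T}\sum_{s=t}^{T}(\cdot)_s=\sum_{s=1}^{T}s\,(\cdot)_s$; relabeling $s\to t$ gives $\sum_{t\ge1}t\,\E_{y'\sim q^\star_\mathcal{Y}}[\KL[p_\theta(\rvx_{t-1}|\rvx_t)\,\|\,p_\theta(\rvx_{t-1}|\rvx_t,y')]]=\sum_{t\ge1}\gL_\text{r}$. Combining with $\sum_{t\ge1}\gL_\text{DM}$ yields the asserted inequality.

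The step I expect to be the main obstacle is the middle one: decomposing the two marginals consistently along the reverse chain and orienting the Jensen bound so that precisely the stated KL emerges --- the unconditional kernel $p_\theta(\rvx_{s-1}|\rvx_s)$ as the first argument and the $q^\star_\mathcal{Y}$-averaged conditional kernel as the second --- rather than its reverse or a symmetrized form. This forces one to be explicit about which measure each expectation is taken under (the forward marginal $q(\rvx_{t-1})$ on the outside, the generative chain on the inside) and about the approximation that identifies the mixture marginal $p_\theta(\rvx_{t-1})$ with the pushforward of the prior through the unconditional (null-conditioned) CFG kernels; pinning these choices down is what turns the heuristic into a proof. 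The telescoping and summation-swap bookkeeping that produces the factor $t$ is then routine.
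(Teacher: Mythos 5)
Your skeleton---substitute the adjustment identity into $\gL^\star_{t-1}$, split the logarithm, discard the model-free factor, expand the marginals along the reverse chain, apply Jensen, and obtain the weight $t$ by swapping the two sums---is the same as the paper's, and your summation-swap bookkeeping for the factor $t$ is exactly what the paper does. The genuine gap is in the middle term, and it is a matter of orientation, not bookkeeping. Substituting Proposition~\ref{theo:adj} verbatim gives, as you write, the extra term $\E_q\bigl[\log\tfrac{p^\star_\theta(\rvx_{t-1})}{p_\theta(\rvx_{t-1})}\bigr]$. But that is not the quantity the paper bounds: between the two propositions it performs a ``post-hoc $\to$ training'' sign reversal (mirroring trainable logit adjustment), so the $p^\star_\theta(\rvx_{t-1}|\rvx_t,y)$ entering the proof of Proposition~\ref{prop:loss} is $p_\theta(\rvx_{t-1}|\rvx_t,y)\,p^\star_\theta(\rvx_{t-1})^{\tau}/p_\theta(\rvx_{t-1})^{\tau}$, with the model-free ratio already dropped and the marginal ratio \emph{inverted} relative to Proposition~\ref{theo:adj}. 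The term to control is therefore $\E_q\bigl[\log\tfrac{p_\theta(\rvx_{t-1})}{p^\star_\theta(\rvx_{t-1})}\bigr]$, the negative of yours; this flip is a change of a $\theta$-dependent term, so it cannot be absorbed into the ``up to a $\theta$-independent additive constant'' reading that legitimately disposes of $\log\tfrac{q(\rvx_t)}{q^\star(\rvx_t)}$.

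This sign is precisely why the obstacle you flagged cannot be resolved along your route. With the paper's orientation, one writes both marginals as chain expectations, invokes the explicit simplification that the ratio of the two chain expectations may be replaced by the expectation of the ratio (an assumption you should state, since it is not automatic), and then Jensen for the convex function $-\log$ gives an upper bound by per-step terms $\E\bigl[\log\tfrac{p_\theta(\rvx_{t'-1}|\rvx_{t'})}{p^\star_\theta(\rvx_{t'-1}|\rvx_{t'})}\bigr]$, read as $\KL[p_\theta(\rvx_{t'-1}|\rvx_{t'})\,\|\,p^\star_\theta(\rvx_{t'-1}|\rvx_{t'})]$; identifying $p^\star_\theta(\rvx_{t'-1}|\rvx_{t'})$ with the mixture $\E_{y'\sim q^\star_\mathcal{Y}}[p_\theta(\rvx_{t'-1}|\rvx_{t'},y')]$ and using convexity of the KL once more puts the unconditional kernel first and the conditional kernel second, exactly as in $\gL_\text{r}$, and the sum swap supplies the factor $t$. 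With your orientation the expectation sits in the numerator inside the logarithm, so Jensen points the wrong way for an upper bound, $\log u\le u-1$ produces a density-ratio (not KL) bound, and the KL you would naturally reach is the reversed one $\E_{y'}\KL[p_\theta(\cdot|\rvx_t,y')\,\|\,p_\theta(\cdot|\rvx_t)]$. The missing idea, then, is adopting the flipped training-time adjustment before setting up $\gL^\star_{t-1}$; once that convention is in place, the rest of your argument coincides with the paper's.
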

\noindent 

The upper bound in the above proposition can be considered as two parts. The first term $\gL_\text{DM}$ corresponds to an ordinary DDPM loss \cite{ddpm} \textit{e.g.,} \eqref{eq:mse}, and the second loss $\gL_\text{r}$ is used to adjust the distribution as a regularization term. Roughly speaking, $\gL_\text{r}$ increases the similarity between the model's output and a random target class. Thus, it reduces the risk of overfitting on the head classes, and enlarges the generation diversity for tail class through knowledge obtained from other classes. When $q^\star_\mathcal{Y}$ is less longtailed than the dataset, this loss also increases the probability for underrepresented tail samples to be chosen during training.

\subsection{Training algorithm} 

\begin{algorithm}[h]
\caption{Training algorithm of CBDM.}\label{xt1_algo}
\begin{algorithmic}[1]
    \For {Every batch of size N}
        \For {$(\rvx_0^{(i)}, y^{(i)})$ in this batch}
            \State Sample $\rvepsilon^{(i)} \sim \mathcal{N}(\vzero, \mI)$, $t\sim \mathcal{U}(\{0, 1, ..., T\})$
            \State $\rvx^{(i)}_{t} = \sqrt{\bar{\alpha}_{t}}\rvx^{(i)}_0+ \sqrt{1-\bar{\alpha}_{t}}\rvepsilon^{(i)}$
            \State Calculate $ \gL_\text{DM} = \| \rvepsilon^{(i)} - \rvepsilon_{\theta}(\rvx^{(i)}_{t}, y^{(i)})\|^2$
            \State Sample $y'^{(i)}$ from $q^\star_\mathcal{Y}$
            \State Calculate the first regularization term
            \\ \quad\quad\quad$\gL_{r} = t\tau ||\rvepsilon_{\theta}(\rvx_{t}^{(i)}, y^{(i)}) - \text{sg}(\rvepsilon_{\theta}(\rvx_{t}^{(i)}, y'^{(i)}))||^2$
            \State Calculate the regularization commitment term \\ \quad\quad\quad$\gL_{rc} = t\tau ||\text{sg}(\rvepsilon_{\theta}(\rvx_{t}^{(i)}, y^{(i)})) - \rvepsilon_{\theta}(\rvx_{t}^{(i)}, y'^{(i)})||^2$
            \State Update with $\gL_\text{CBDM} = \gL_\text{DM} + \gL_{r} + \gamma \gL_{rc}$
        \EndFor
    \EndFor
\end{algorithmic}
\end{algorithm}

The detailed training algorithm of CBDM is presented in Alg.~(\ref{xt1_algo}). In the algorithm, we reduce the distribution adjustment loss $\gL_{r}$ as a square error loss with Monte-Carlo samples as indicated by Eqn.(\ref{eq:loss-distribution-adjustment}), where $\gY$ is a set of samples that drawn following the distribution $q^\star_\mathcal{Y}$ and $y$ denotes the image label. Note for CFG\cite{classifier-free}, there is a fixed probability (usually 10\%) to drop the condition, \textit{i.e.,} $y=\mathrm{None}$. 

\begin{equation}
    \gL_\mathrm{{r}} (\rvx_t, y, t) = \frac{1}{|\mathcal{Y}|}\sum_{y'\in\mathcal{Y}}[t||\rvepsilon_\theta(\rvx_t, y) - \rvepsilon_\theta(\rvx_t, y')||^2], \label{eq:loss-distribution-adjustment}
\end{equation}

For the implementation in practice, CBDM can be plugged into any existing {conditional} diffusion models by adopting their model architecture and adjusting the training loss $\gL_\text{DM}$ following lines $6-11$.
Specifically, the choice of the regularization weight $\tau$ affects the sharpness of the density ratio $\frac{p_\theta\left(\rvx_t\right)}{p_\theta^{\star}\left(\rvx_t\right)}$. For the theoretical analysis, please refer to our proof of Prop.~(\ref{prop:loss}) in the Appendix. 
In addition, the choice of the sampling set $\mathcal{Y}$ is another important perspective of CBDM, which depends on the target distribution we wish to adjust. Without loss of generality, we discuss two cases here. On the one hand, we can adjust the label distribution to a class-balanced label distribution, where we sample labels to construct $\mathcal{Y}^{bal}$. On the other hand, if the data distribution is heavily long-tailed, we can also target the adjusted distribution to a relatively less class-imbalanced distribution for stabilized training.
In our experiments, we show CBDM can work well for both cases in different mechanisms. 

Moreover, we observe that naively optimizing with this loss could make the model collapse to some trivial solutions, where the model outputs the same result regardless of the condition $y$ and thus degenerates the conditional generation performance. Therefore, we follow previous works to apply a stop gradient operation \cite{vqvae,chen2021exploring} to prevent this issue.
The final loss of CBDM is
\begin{align}
    \gL_\mathrm{CBDM}^{\tau, \gamma, \gY} (\rvx_t, y, t, \rvepsilon) = &\|\rvepsilon_\theta(\rvx_t, y) - \rvepsilon \|^2 \notag\\ 
    + \frac{\tau t}{|\gY|} \sum_{y'\in \mathcal{Y}} ( &\|\rvepsilon_\theta(\rvx_t, y) - \text{sg}(\rvepsilon_\theta(\rvx_t, y')) \|^2  \notag \\
    + \gamma &\|\text{sg}(\rvepsilon_\theta(\rvx_t, y)) - \rvepsilon_\theta(\rvx_t, y') \|^2), \label{eq:loss-CBDM}
\end{align}
where ``sg($\cdot$)" denotes the stop gradient operation; $\tau$, $\gamma$ are weights for the regularization and the commitment term respectively with $\gamma$ set to $\frac{1}{4}$ in a default setting, and $|\gY|$ denotes the number of elements in the label set.

\label{sec:approach}

\section{Experimental results}

\begin{figure}[t]
    \centering
    \begin{subfigure}[b]{0.49\linewidth}
         \centering
         \includegraphics[width=0.99\linewidth]{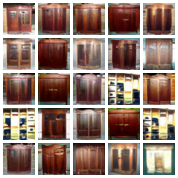}
         \caption{DDPM / Tail Class 94}
         \label{fig:94-free}
    \end{subfigure}
    \hfill
    \begin{subfigure}[b]{0.49\linewidth}
         \centering
         \includegraphics[width=0.99\linewidth]{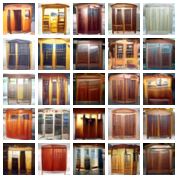}
         \caption{CBDM / Tail Class 94}
         \label{fig:94-reg}
    \end{subfigure}
    \hfill
    \begin{subfigure}[b]{0.49\linewidth}
         \centering
         \includegraphics[width=0.99\linewidth]{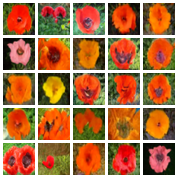}
         \caption{DDPM / Body Class 62}
         \label{fig:62-free}
    \end{subfigure}
    \hfill
    \begin{subfigure}[b]{0.49\linewidth}
         \centering
         \includegraphics[width=0.99\linewidth]{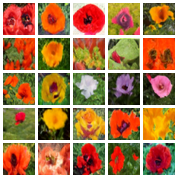}
         \caption{CBDM / Body Class 62}
         \label{fig:62-reg}
    \end{subfigure}
    \hfill
    \caption{Comparison of image generation on heavily tail-distributed (94) and mild tail-distributed (62) classes between DDPM and CBDM.}
    \label{fig:visual}
\end{figure}

\subsection{Experimental setup}
\label{subsec:setup}

\paragraph{Datasets}
We first chose two common datasets that are widely used in the domain of image synthesis, CIFAR10/CIFAR100, and their corresponding long-tailed versions CIFAR10LT and CIFAR100LT. The construction of CIFAR10LT and CIFAR100LT follows \cite{cao2019learning}, where the size decreases exponentially with its category index according to the imbalance factor $\textit{imb}=0.01$. We also conduct experiments on 3 higher-resolution datasets, whose details and results could be found in the Appendix.
\vspace{-12pt} \paragraph{Implementation details} We strictly follow the training configurations of the baseline models. {For DDPM, we set diffusion schedule $\beta_1={10}^{-4}$ and $\beta_T=0.02$ with $T=1,000$, and optimize the network with an Adam optimizer whose learning rate is $0.0002$ after 5,000 epochs of warmup. Considering that the size and semantic complexity of the datasets vary greatly, we choose appropriate epochs for each dataset}. 
In the ablation studies, we also include EDM \cite{kdiffusion}
and Diffusion-ViT \cite{diffusevit} as our backbones to show the compatibility with different architectures and score-matching methods.
Note that, when a model does not use conditional input, we follow \cite{guided} to slightly modify the backbone
~\cite{vit} by adding an extra embedding layer. 
\vspace{-12pt} \paragraph{Baselines}
We first adopt following classic methods to build baselines: re-sampling (RS), soft re-sampling methods (RS-SQRT) \cite{mahajan2018sqrtros}, and augmentation-based methods including Differentiable Augmentation (DiffAug) \cite{AUGMDifferential} and Adaptive Augmentation (ADA) \cite{GANlimited}.
Besides diffusion-based baselines, previous state-of-the-art generative models that study 
the long-tailed distribution, \textit{e.g.,} CBGAN \cite{cbgan}, 
and group spectral regularization for GANs \cite{harsh2022ganlt}, are also included in the comparison. 
Precisely, RS follows the uniform class distribution and RS-SQRT \cite{mahajan2018sqrtros} samples with a probability determined by the square root of class frequency.
And DiffAug \cite{AUGMDifferential} is directly applied to all training images following their default threshold. Following \cite{kdiffusion}, ADA~\cite{GANlimited} is not only applied to images, the augmentation pipeline is also encoded as a condition via an additional embedding layer in the U-Net.
\vspace{-12pt} \paragraph{Metrics}
CBDM and corresponding baselines are evaluated in terms of both generation diversity and fidelity via Frechet Inception Distance (FID) \cite{fid}, Inception Score (IS) \cite{is}, Recall \cite{kyn2019prd} and ${F}_{\beta}$ \cite{sajjadi2018prd}. {We measure Recall and ${F}_{\beta}$ using Inception-V3 features, and take respectively $K=5$ for Recall, 1/8 and 8 for the threshold in ${F}_{\beta}$, and 20 times of class number as the clustering number of ${F}_{\beta}$ to capture the inner class variance}. To this end, Recall and ${F}_{8}$ can be regarded as diversity metrics and IS and ${F}_{1/8}$ are more inclined to measure fidelity. In evaluation, we take the class-balanced version corresponding to those used in training, and all metrics are measured with 50k generated images (10k for ablation experiments). The classifier-free guidance is adapted in sampling and we tune the guidance strength $\omega$ for both baselines and our method to ensure the best performance. 
Specifically, we take $\omega = 1.6, 0.8, 1.0, 0.8$ respectively for the four CIFAR datasets mentioned above.

\begin{table*}[t]
  \centering
  \begin{tabular}{c|l|lllll}
    \toprule[1.5pt]
    Dataset & Model & \textbf{FID}$\downarrow$ &  $\bm{F_{8}}$ $\uparrow$&\textbf{Recall}$\uparrow$ & IS$\uparrow$  & $F_{1/8}$ $\uparrow$ \\
    \midrule
    CIFAR100LT & DDPM\cite{ddpm} & 7.38	& 0.85  & 0.52   &13.11&  0.88 \\
    &  ~+ADA\cite{GANlimited} & 6.16 & 0.91 &  0.57   & 12.71 & 0.90 \\
    &  ~+DiffAug\cite{AUGMDifferential}  & 9.19  & 0.88  &  0.47   & 11.56 & 0.86 \\    
    & ~+ RS\cite{mahajan2018sqrtros} & 10.50 & 0.65    & 0.49   & 12.60 & 0.83 \\
    & ~+SQRT- RS\cite{mahajan2018sqrtros} & 9.72  & 0.66  & 0.47    & \textbf{13.47} & 0.83 \\ 
    \rowcolor{Gray}
    \cellcolor{white}& CBDM (ours)  & 6.26 $_{\textcolor{blue}{(-1.12)}}$ & 0.91 $_{\textcolor{blue}{(+0.06)}}$&   0.57 $_{\textcolor{blue}{(+0.05)}}$ &   13.24 $_{\textcolor{blue}{(+0.13)}}$ & 0.89 $_{\textcolor{blue}{(+0.01)}}$ \\
    \rowcolor{Gray}
    \cellcolor{white}& ~+ADA\cite{GANlimited}  & \textbf{5.81} $_{\textcolor{blue}{(-1.57)}}$  & \textbf{0.91} $_{\textcolor{blue}{(+0.06)}}$ & \textbf{0.57} $_{\textcolor{blue}{(+0.05)}}$  & 13.34 $_{\textcolor{blue}{(+0.23)}}$ & \textbf{0.90} $_{\textcolor{blue}{(+0.02)}}$ \\
    \midrule
    CIFAR100& DDPM  & 3.11	& 0.97 &  0.65   & 13.65  & 0.96 \\
    \rowcolor{Gray}
    \cellcolor{white}& CBDM (ours)  & \textbf{2.72} $_{\textcolor{blue}{(-0.39)}}$ & 0.97 $_{{(\pm 0)}}$& \textbf{0.67 } $_{\textcolor{blue}{(+0.02)}}$   & \textbf{14.03} $_{\textcolor{blue}{(+0.38)}}$ &  {0.96 }$_{{(\pm 0)}}$\\
    \midrule
    CIFAR10LT & DDPM & 5.76& 0.97 & 0.57    &9.17 & 0.95 \\
    \rowcolor{Gray}
    \cellcolor{white}& CBDM (ours)  &\textbf{5.46} $_{\textcolor{blue}{(-0.30)}}$ & {0.97 }$_{{(\pm 0)}}$ &  \textbf{0.59 }$_{\textcolor{blue}{(+0.02)}}$ &  \textbf{9.28} $_{\textcolor{blue}{(+0.11)}}$&  {0.95 } $_{{(\pm 0)}}$\\
    \midrule
    CIFAR10 & DDPM &  3.16	& 0.99 &   0.64  & \textbf{9.80} & {0.98 } \\
    \rowcolor{Gray}
    \cellcolor{white}&CBDM (ours) &  \textbf{3.03} $_{\textcolor{blue}{(-0.13)}}$ &  {0.99 }$_{{(\pm 0)}}$ &  \textbf{0.65 }$_{\textcolor{blue}{(+0.01)}}$  & 9.63 $_{\textcolor{red}{(-0.17)}}$ &  0.98 $_{{(\pm 0)}}$   \\
    \bottomrule[1.5pt]
    \end{tabular}
    \caption{CBDM performance on different datasets. In the table, the first three columns are diversity-related, and we mark the best results in bold. As the comparison between CBDM and DDPM is more straightforward, we mark the performance gain in parentheses next to the CBDM results, using {blue} and {red} to indicate {improvements} and {degradation} respectively.}
  \label{tab:res}
\end{table*}

\subsection{Main results}
In this part, we consider DDPM as the most direct baseline. As shown in \Tabref{tab:res}, CBDM overall outperforms DDPM on all datasets {except the IS is slightly lower on CIFAR10}. 
For datasets with more classes, the improvements are more significant in terms of both diversity and fidelity, as on CIFAR100LT and CIFAR100.
We further investigate the performance of diffusion models when combined with augmentation-based and re-sampling methods. Surprisingly, except for ADA, we consistently observe a performance degradation when these methods are compared with the vanilla DDPM. Moreover, adding the ADA augmentation with CBDM training leads to a further improvement in terms of both diversity and fidelity.

As a qualitative justification, \Figref{fig:visual} provides the visualization comparison between DDPM and CBDM on a relatively mild tail-distributed class (62) and a tail-distributed class (94). We remark that CBDM generates more diverse images. For example, on the tail class 94, the cabinets shown in \Figref{fig:94-reg} have more color and texture, which justifies the improvements of diversity-related metrics in \Tabref{tab:res}. On the contrary, DDPM only generates images that are highly similar to the training data. 

\begin{table}
  \centering
  \resizebox{\columnwidth}{!}{
  \begin{tabular}{c|c|c|cccc}
    \toprule
    \textit{PT} & \textit{FT} & $\mathcal{Y}$ & \textbf{FID}  & $\bm{F_8}$  & \textbf{Recall}  & IS \\
    \midrule
    CIFAR
    &  - & $\mathcal{Y}^{train}$& 6.26& 0.91 & 0.57 &\textbf{13.24}\\
    100LT& & $\mathcal{Y}^{bal}$  & 6.10& \textbf{0.94 } & \textbf{0.64} & 12.29 \\
    &  & $\mathcal{Y}^{sqrt}$      & \textbf{6.00} & 0.93 & 0.59 &13.00 \\
    \bottomrule
    CIFAR & CIFAR & - & 7.38& 0.85  & 0.52  & 13.11  \\
    100LT& 100LT& $\mathcal{Y}^{train}$  & 6.20 & 0.88 & 0.54  &\textbf{13.36} \\
    &   & $\mathcal{Y}^{bal}$ & \textbf{5.85} &\textbf{0.92 } & \textbf{0.60 } & 13.29 \\
    \midrule
    CIFAR & CIFAR & -  & 5.39 & 0.66  & 0.58  & 9.43  \\
    100&   10LT& $\mathcal{Y}^{train}$  & \textbf{5.05}& \textbf{0.67}  & 0.60 & 9.48  \\
    &  & $\mathcal{Y}^{bal}$   & 5.90 &0.66  & 0.60  & \textbf{9.49} \\
    \bottomrule
    \end{tabular}}
    \caption{CBDM performance for 3 mechanisms under different regularization sampling set $\mathcal{Y}$. Column \textit{PT} indicates the dataset used in {(pre)training}, column \textit{FT} indicates the dataset used for fine-tuning.
    The rows where $\gY$ marked as ``-`` represent the results with DDPM, {and rows where \textit{FT} marked as ``-`` represent the results of CBDM trained from scratch. Three diversity-preferred metrics and the best results are in bold for emphasis.}}
  \label{tab:finetune}
\end{table}

\begin{figure}[t]
    \centering
     \includegraphics[width=0.9\linewidth]{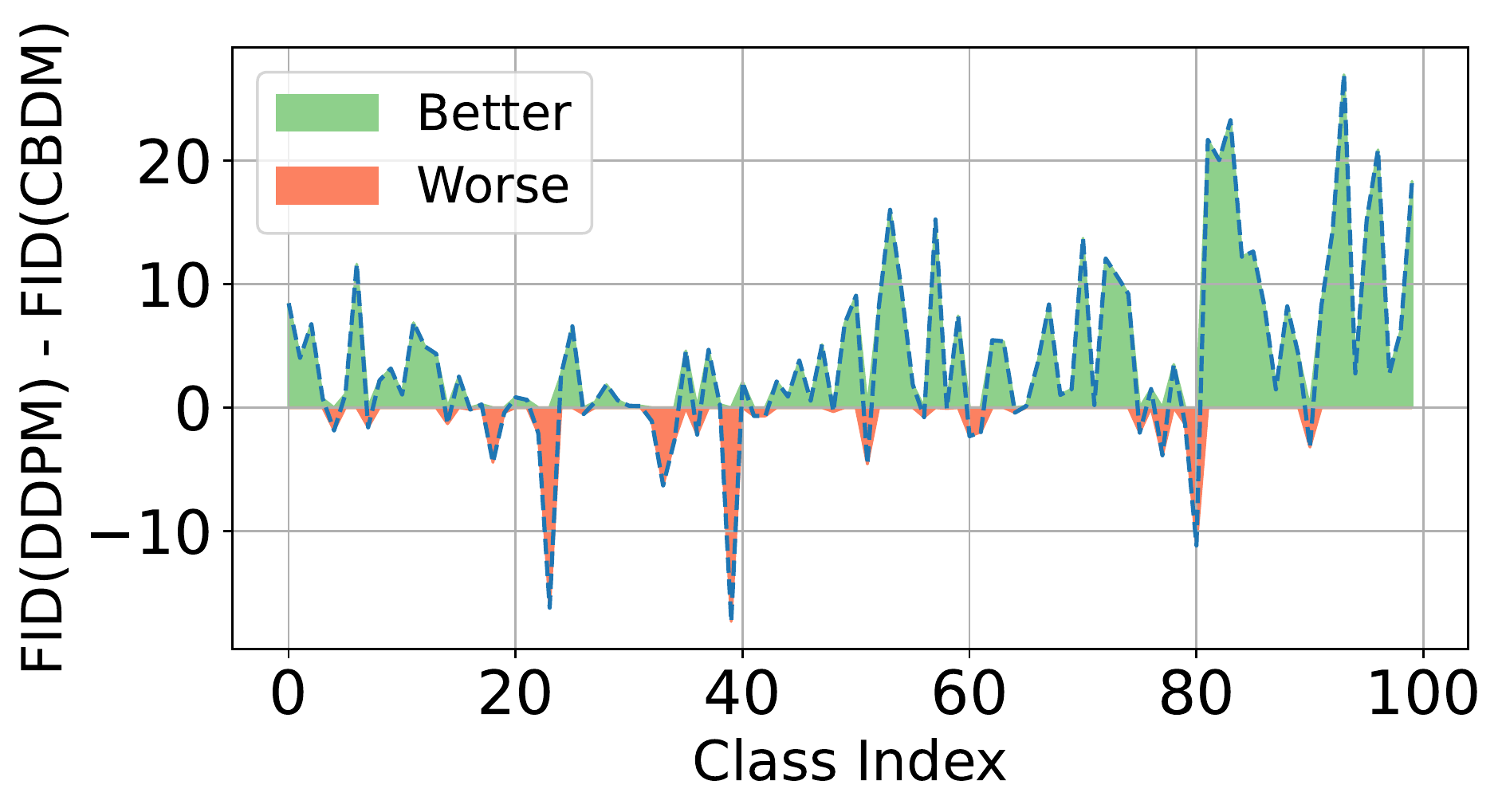}
    \caption{FID improvement per class compared to DDPM. The curve is smoothened by a moving average of 5 unit.}
    \label{fig:classwisefid}
\end{figure}

\begin{table}[t]
    \centering
    \resizebox{\columnwidth}{!}{
      \begin{tabular}{l|l|cc}
        \toprule
        Dataset & Model & FID & IS  \\
        \midrule
        CIFAR & BigGAN+DA+$R_\text{LC}$\cite{2021ganreglimited}$_{{(2021)}}$  & 2.99 & - \\
        100& DDPM  & 3.11 & 13.65\\
        \rowcolor{Gray}
        \cellcolor{white}&CBDM (ours)  & \textbf{2.72} & \textbf{14.03} \\
        \midrule
        CIFAR&CBGAN\cite{cbgan}$_{{(2021)}}$  & 28.17&-\\
        100LT &DDPM   & 12.31 &  12.69\\
        \rowcolor{Gray}
        \cellcolor{white}&CBDM (ours)+ADA\cite{GANlimited}  & \textbf{10.50} & \textbf{12.81} \\
        \midrule
        CIFAR & CBGAN\cite{cbgan}$_{{(2021)}}$  & 32.93&-\\
        10LT& SNGAN\cite{miyato2018spectral}+gSR\cite{harsh2022ganlt}$_{{(2022)}}$ & 18.58& 7.80\\
         & DDPM & 9.68 & 9.00 \\
         \rowcolor{Gray}
         \cellcolor{white}& CBDM (ours) & \textbf{9.38} &9.12 \\
        \bottomrule
        \end{tabular}}
        \caption{Comparision with long-tailed SoTAs on CIFAR. Following their setting, all methods are evaluated with 10k generated images and ground truth images are from their validation dataset. {The publish year of every baselines are marked in next to the citation.}}
      \label{tab:benchmark}
\end{table}

\vspace{-12pt} \paragraph{Case-by-case study.} 
To better understand the generation conditioned on each class, we compare the FID case-by-case between DDPM and CBDM on each class. The results are shown in \Figref{fig:classwisefid}. Compared to DDPM, CBDM shows more consistent improvements for the class index greater than 40. In the tail classes, CBDM outperforms than DDPM the FID by a more significant margin.
\vspace{-12pt} \paragraph{The choice of label set $\mathcal{Y}$}
\label{finetune}

We investigate the effects of label distribution for CBDM under three mechanisms. The first mechanism is the vanilla CBDM given in \Algref{xt1_algo} which trains a diffusion model with regularization from scratch. Besides, we also study CBDM with two fine-tuning configurations. The first setting concerns using CBDM to adjust an existing model pre-trained on a long-tailed dataset to improve its performance. The second one refers to fine-tuning a pre-trained model to adapt to a smaller dataset with an imbalanced class distribution using CBDM. For those mechanisms, we consider three settings for sampling the label set:  (1) a label distribution similar to the training set (denoted $\gY^{train}$); (2) a totally uniform label distribution (denoted $\gY^{bal}$); (3) a less long-tailed distribution compared to $\gY^{train}$, whose class frequency is the square root of the original one (denoted $\gY^{sqrt}$).

The results are shown in \Tabref{tab:finetune}, where we found that CBDM performs well on all these configurations.
As we analyzed in section~\ref{sec:approach}, when the empirical distribution of $\gY$ has a significant difference from the label distribution of the dataset, adjusting with $\gY^{bal}$ could hurt training stability and result in performance degradation. We observe that $\gY^{bal}$ only shows better performance regarding generation diversity, while the generation fidelity has an obvious gap than the other settings. Using a relatively mild set $\gY^{sqrt}$ has better performance in terms of IS and preserves the generation diversity, which leads to the best FID among all these settings. 
On the contrary, when a pre-trained model is available, fine-tuning on top of it ensures better stability, and we can observe in such case using $\gY^{bal}$ has significant improvement than the vanilla DDPM or {fine-tuning using} $\gY^{train}$. When adapting to a different dataset, we also observe using $\gY^{bal}$ causes stability issue and produce undesired results.

\vspace{-12pt}
\paragraph{Enhancement of training classifiers on long-tailed data}
\begin{table}[t]
  \centering
  \begin{tabular}{l| cc}
    \toprule
    Training data & Precison & Recall \\
    \midrule
    \textcolor{gray}{{CIFAR100}} & \textcolor{gray}{0.70}  & \textcolor{gray}{0.67}  \\ \hline
    CIFAR100LT & 0.45	& 0.39 \\
    ~+ DDPM gens (50k)   & 0.48 $_{\textcolor{blue}{(+0.03)}}$  & 0.44 $_{\textcolor{blue}{(+0.05)}}$\\
    \rowcolor{Gray}
    ~+ CBDM gens (50k)   & \textbf{0.49} $_{\textcolor{blue}{(+0.04)}}$  & \textbf{0.47} $_{\textcolor{blue}{(+0.08)}}$\\
    \bottomrule
  \end{tabular}
  \caption{Recognition results of different training data. All configurations are evaluated on the testing set of {normal} CIFAR100. Gains based on CIFAR100LT dataset is noted in blue. }
  \label{tab:recog}
\end{table}
As training models on long-tailed data usually leads to undesired classification results, we investigate whether the generated data could help improve the classifier trained on long-tailed data as a complementary evaluation metrics. Here, we train ResNet-32 models, respectively with CIFAR100, {CIFAR100LT} and {CIFAR100LT} augmented with DDPM and CBDM generations (50k samples). 
\Tabref{tab:recog} shows that the training on long-tailed data results in severe performance degradation. When augmenting the long-tailed data with the generated data, we observe fairly improvements in the trained classifier. Moreover, comparing the results between DDPM and CBDM, we observe that the improvements by using CBDM are more significant, which validates the effectiveness of our approach. Especially, the gain on recall is more remarkable than the gain on precision, which means that the generated images are more diverse than DDPM and justifies the results in \Tabref{tab:res}.

\vspace{-12pt} \paragraph{Comparison with other benchmarks}
We include the representative state-of-the-art long-tailed generative modeling approaches in the comparison, most of which are based on GANs since they suffer more severe problems when the training data is skewed. For a fair comparison, we strictly follow their evaluation setting and the results are reported in \Tabref{tab:benchmark}. From the results, DDPM surpasses the performance of the baselines on all the datasets, and CBDM exhibits even stronger performance. 

\subsection{Ablations}
\begin{table}[t]
  \centering
  \begin{tabular}{c|ccccc}
    \toprule
    Backbone & \textbf{FID} &  $\bm{F_{8}}$ &\textbf{Recall} & IS  & $F_{1/8}$  \\
    \midrule
    EDM\cite{kdiffusion}& 8.64 & 0.82 	& 0.48 & 12.16 & 0.86  \\
    \rowcolor{Gray}
     ~+CBDM (ours) & \textbf{7.97}	&\textbf{0.88}  & \textbf{0.52}  & \textbf{12.01} &\textbf{0.86}   \\
    \midrule
    Diffuse-ViT\cite{diffusevit}& 37.6	& 0.76  & 0.45  & 7.66 &  0.62  \\
    \rowcolor{Gray}
     ~+CBDM (ours)& \textbf{30.0}	& \textbf{0.81}  & \textbf{0.51}  & \textbf{8.00} & \textbf{0.65} \\
    \bottomrule
  \end{tabular}
  \caption{CBDM performance using different backbones on CIFAR100LT dataset. Three diversity-biased metrics.}
  \label{tab:backbone}
\end{table}

\begin{table}[t]
  \centering
  \resizebox{\columnwidth}{!}{
  \begin{tabular}{c|lllll}
    \toprule
    Dataset & \textbf{FID}$\downarrow$ &  $\bm{F_{8}}$ $\uparrow$&\textbf{Recall}$\uparrow$ & IS$\uparrow$  & $F_{1/8}$ $\uparrow$ \\
    \hline
    CIFAR & 3.44  & 0.97  & 0.69   & 13.11 & 0.96  \\
    100 & ${\textcolor{red}{(+0.72)}}$  & ${{(\pm 0)}}$  & ${\textcolor{blue}{(+0.02)}}$   & ${\textcolor{red}{(-0.92)}}$ & ${{(\pm 0)}}$ \\
    \hline
    CIFAR & 6.27	& 0.93  & 0.61 & 12.56 & 0.89  \\
    100LT & ${\textcolor{red}{(+0.01)}}$	& ${\textcolor{blue}{(+0.02)}}$  & ${\textcolor{blue}{(+0.04)}}$ & ${\textcolor{red}{(-0.68)}}$ & ${{(\pm 0)}}$ \\
    \bottomrule
  \end{tabular}
  }
  \caption{Sampling with 100 DDIM steps on CIFAR100 and CIFAR100LT. The difference compared with those using DDPM steps are reported below each result.}
  \label{tab:ddim}
\end{table}

\paragraph{Compatibility with different backbones} 
We first investigate the performance of our method with different diffusion models. Without loss of generality, we adopt EDM~\cite{kdiffusion} and Diffusion-ViT~\cite{diffusevit}, which use a different score-matching method and different denoising network backbone. 
As shown in \Tabref{tab:backbone}, compared with the corresponding baselines, CBDM improves the generation result on CIFAR100LT dataset for both backbones, demonstrating the compatibility with a variety of backbones and the effectiveness of handling long-tailed data.

\vspace{-12pt} \paragraph{CBDM with DDIM sampling}
Apart from DDPM reverse sampling and neural SDE methods, recent deterministic ODE methods such as DDIM~\cite{ddim} are widely used in the generation process of diffusion models. We also conduct experiments to study whether CBDM is compatible with DDIM steps. \Tabref{tab:ddim} shows the comparison of using DDPM and DDIM reverse steps when the generation steps are compressed to $1/10$. We note on normal CIFAR100, FID and IS become slightly worse than using 1000 DDPM steps. On CIFAR100LT, except for IS, all metrics remain almost the same as using DDPM steps.

\vspace{-12pt} \paragraph{Effects of hyperparameters}
We tested the effects of regularization weight $\tau$ in CBDM, which is set in default $\tau=\tau_0=0.001$ so that the weight does not surpass $1$ at all steps. As shown in \Figref{fig:tau}, we search weights across different scales and found that the optimal weight is $\tau=\tau_0$. The results also point out that $\tau$ should not be too large or too small. We also tested the difference between CBDM models trained using commitment loss $\gL_{rc}$ and those trained without it. Our experiments show that the FID score of the models using the commitment loss  (8.30) are significantly lower than those of the models not using it (8.84).

\begin{figure}[t]
    \centering
    \begin{subfigure}[b]{0.49\linewidth}
         \centering
         \includegraphics[width=0.99\linewidth]{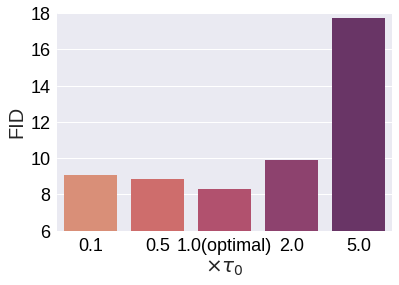}
    \end{subfigure}
    \hfill
    \begin{subfigure}[b]{0.49\linewidth}
         \centering
         \includegraphics[width=0.99\linewidth]{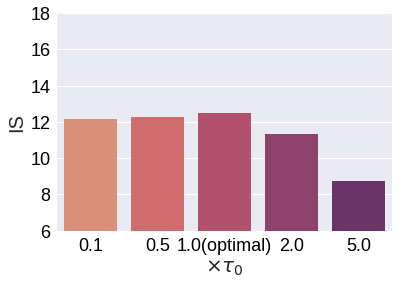}
    \end{subfigure}\vspace{-3mm}
    \caption{FID/IS score under different regularization weight $\tau$}
    \label{fig:tau}
\end{figure}

\vspace{-12pt} \paragraph{Guidance strength $\omega$}
\begin{figure}[t]
    \centering
    \begin{subfigure}[b]{0.49\linewidth}
         \centering
         \includegraphics[width=0.99\linewidth]{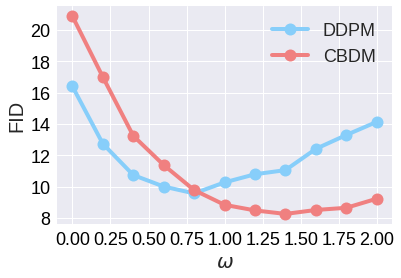}
    \end{subfigure}
    \hfill
    \begin{subfigure}[b]{0.49\linewidth}
         \centering
         \includegraphics[width=0.99\linewidth]{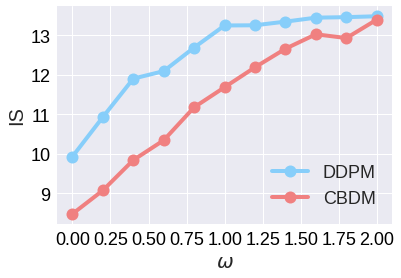}
    \end{subfigure}\vspace{-3mm}
    \caption{FID/IS score under different guidance strength $\omega$}
    \label{fig:omega}
\end{figure}

As we adapt the Classifier-Free Guidance (CFG) during sampling, we also show the effects of guidance strength $\omega$. In \Figref{fig:omega}, we searched $\omega$ from $0$ to $2$ with an interval of $0.2$ and compare FID and IS score of DDPM and CBDM models. We observe that the FID of CBDM remains decreasing at larger guidance strengths compared to DDPM; and although the IS of CBDM is consistently weaker than that of DDPM, they reach the same level when the guidance strength reaches $2.0$, where the FID of CBDM (6.75) is still significantly lower than the best FID of DDPM (7.38).

\vspace{-12pt} \paragraph{Fidelity-diversity control}
\cite{classifier-free} describes the phenomenon where excessive guidance strength $\omega$ tends to lead to better fidelity at the price of overfitting and diversity degradation, as a fidelity-diversity tradeoff. 
CBDM additionally provides another hyperparameter: the regularization weight $\tau$ that promotes model diversity through cross-class information interaction.
\Figref{fig:fidel_div} shows the body class (53) generation results under different guidance strengths and different regularization weight. We observe that a DDPM model with high guidance strength is able to generate very realistic results, but its images overlap almost exactly with a sample in the training set. In contrast, the guidance term of CBDM does not drastically change the image content, but rather refines the image content to be closer to the selected class than the unguided result. This means that CBDM retains more of the diversity of the unguided case and effectively enhances the category information with guide terms.

\begin{figure}[t]
    \centering
     \includegraphics[width=0.95\linewidth]{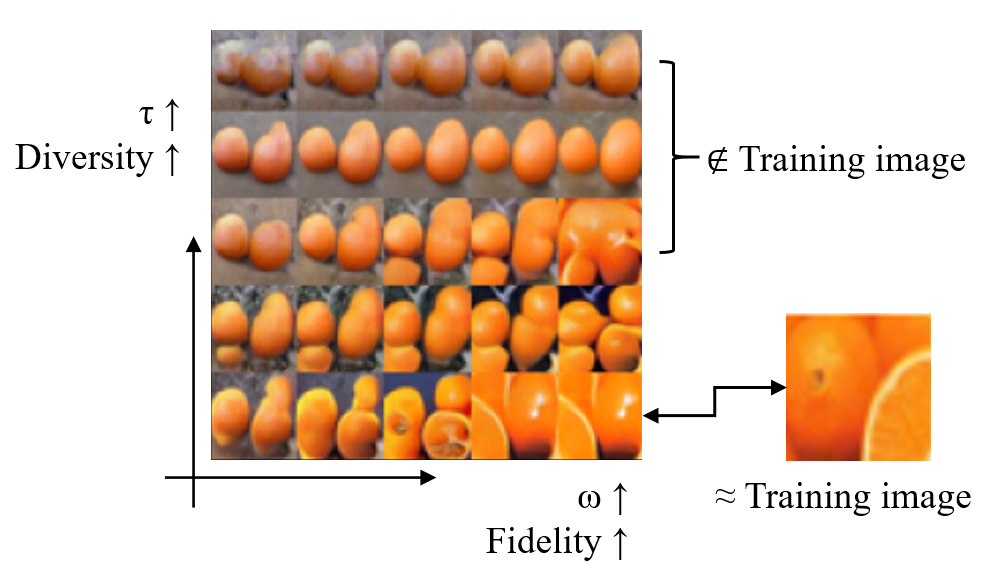}\vspace{-3mm}
    \caption{Image fidelity and diversity controlled by guidance strength $\omega$ and regularization weight $\tau$.}
    \label{fig:fidel_div}
\end{figure}

\label{sec:experiments}
\section{Conclusion}

In this paper, we focus on the problem when the training data is imbalanced and the diffusion model drops in generation quality on  tail classes. We first establish a low baseline for this task by examining the most common paradigms in long-tail scenarios and in training generative models with limited data. Thereafter, we propose the Class-Balancing Diffusion Model (CBDM) through theoretical analysis. This approach can be implemented very cleanly in the training of any conditional diffusion model and thus has the potential to be widely used in other fields. Our experiments show that the CBDM approach significantly improves model generation diversity with high fidelity, on both class-balanced and class-imbalanced datasets. 

\section{Acknowledgement}
This work is supported by the National Key R\&D Program of China (No. 2022ZD0160703, No. 2022ZD0160702), National Natural Science Foundation of China (62271308), STCSM (No. 22511106101, No. 18DZ2270700, No. 22511105700, No. 21DZ1100100), 111 plan (No. BP0719010), and State Key Laboratory of UHD Video and Audio Production and Presentation.

\newpage

{\small
\bibliographystyle{ieee_fullname}
\bibliography{egbib}
}

\newpage
\onecolumn

\begin{center}
    \Large
    \textbf{Supplementary Materials}
\end{center}

\setcounter{prop}{0}

\appendix
\section{Full demonstration for CBDM}

\begin{prop}
When training a diffusion model parameterized with $\theta$ on a class-imbalanced dataset, 
its conditional reverse distribution $p_\theta\left(\rvx_{t-1}\middle|\rvx_{t},y\right)$ can be corrected with an adjustment schema:
    \begin{align}
        p_{\theta}^{\star}(\rvx_{t-1}|\rvx_{t}, y) = p_{\theta}(\rvx_{t-1}|\rvx_{t}, y) \frac{p_{\theta}(\rvx_{t-1})}{p_{\theta}^{\star}(\rvx_{t-1})}
        \frac{q^{\star}(\rvx_{t})}{q(\rvx_{t})}
    \end{align}
    \label{theo:adj}
\end{prop}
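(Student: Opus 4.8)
The plan is to derive the adjustment schema by writing the reverse transition densities in terms of the forward marginals and Bayes' rule, then tracking where the class-imbalance enters. First I would recall that, regardless of whether training uses a balanced or imbalanced label distribution, the diffusion denoising identity gives
\begin{align}
p_\theta(\rvx_{t-1}\mid\rvx_t, y) = p_\theta(\rvx_t\mid\rvx_{t-1}, y)\,\frac{p_\theta(\rvx_{t-1}\mid y)}{p_\theta(\rvx_t\mid y)},
\end{align}
and the analogous identity for $p_\theta^\star$. The crucial modelling assumption to state explicitly is that the \emph{forward} (noising) kernel is fixed and independent of both $y$ and of the label distribution: $p_\theta(\rvx_t\mid\rvx_{t-1}, y) = p_\theta^\star(\rvx_t\mid\rvx_{t-1}, y) = q(\rvx_t\mid\rvx_{t-1})$. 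This is exactly the Markov forward chain defined in the preliminaries, so it is legitimate. Likewise the conditional clean-data-to-noisy kernel $q(\rvx_t\mid\rvx_0)$ is shared, hence $p_\theta(\rvx_t\mid\rvx_0,y)$ and $p_\theta^\star(\rvx_t\mid\rvx_0,y)$ agree, and the only thing that differs between the starred and unstarred worlds is the induced marginal over $\rvx_0$ (and therefore over $\rvx_t$) coming from the different class frequencies.

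Next I would take the ratio of the two denoising identities. The forward-kernel factors cancel, leaving
\begin{align}
\frac{p_\theta^\star(\rvx_{t-1}\mid\rvx_t, y)}{p_\theta(\rvx_{t-1}\mid\rvx_t, y)} = \frac{p_\theta^\star(\rvx_{t-1}\mid y)}{p_\theta(\rvx_{t-1}\mid y)}\cdot\frac{p_\theta(\rvx_t\mid y)}{p_\theta^\star(\rvx_t\mid y)}.
\end{align}
Now I need to convert the class-conditional marginals into unconditional ones. Here I would use the assumption that, conditioned on $y$, the generative model's sample distribution at each timestep is the \emph{same} in both worlds — i.e. the learned conditional $p_\theta(\rvx_t\mid y)$ is what it is, and the star only re-weights how often each $y$ appears, not the per-class behaviour. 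Under that reading one writes $p_\theta(\rvx_t\mid y)$ and relates it to the unconditional $p_\theta(\rvx_t)$ and $p_\theta^\star(\rvx_t)$ through the mixture over $y$; combined with the matched forward structure $q^\star(\rvx_t\mid y) = q(\rvx_t\mid y)$ this lets the $y$-conditional factors at step $t$ be replaced by $q^\star(\rvx_t)/q(\rvx_t)$ and the $y$-conditional factors at step $t-1$ by $p_\theta(\rvx_{t-1})/p_\theta^\star(\rvx_{t-1})$, after which rearranging yields precisely
\begin{align}
p_{\theta}^{\star}(\rvx_{t-1}\mid\rvx_{t}, y) = p_{\theta}(\rvx_{t-1}\mid\rvx_{t}, y)\,\frac{p_{\theta}(\rvx_{t-1})}{p_{\theta}^{\star}(\rvx_{t-1})}\,\frac{q^{\star}(\rvx_{t})}{q(\rvx_{t})}.
\end{align}

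The main obstacle, and the step that needs the most care, is the passage from class-conditional marginals to unconditional marginals: it is not literally an identity unless one adopts a specific assumption about what "the same model trained with a correct $q(y)/p_\theta(y)$" means — concretely, that the per-class denoisers coincide and only the sampling prior over classes changes, together with the fixed-forward-process assumption so that $q^\star(\rvx_t\mid\rvx_0) = q(\rvx_t\mid\rvx_0)$. I would therefore make that assumption explicit at the start of the proof and flag that the schema holds exactly under it (and approximately otherwise, which is all that is needed since the next proposition only uses it to motivate a training objective). The remaining work — the two Bayes inversions and the cancellation of forward kernels — is routine bookkeeping.
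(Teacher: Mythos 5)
There is a genuine gap at your final substitution step. Under the assumption you make explicit --- that the per-class behaviour of the model is identical in the two worlds, $p_\theta^\star(\rvx_t\mid y)=p_\theta(\rvx_t\mid y)$ and $p_\theta^\star(\rvx_{t-1}\mid y)=p_\theta(\rvx_{t-1}\mid y)$ --- the ratio you correctly derive,
\begin{equation}
\frac{p_\theta^\star(\rvx_{t-1}\mid\rvx_t,y)}{p_\theta(\rvx_{t-1}\mid\rvx_t,y)}
=\frac{p_\theta^\star(\rvx_{t-1}\mid y)}{p_\theta(\rvx_{t-1}\mid y)}\cdot\frac{p_\theta(\rvx_t\mid y)}{p_\theta^\star(\rvx_t\mid y)},
\end{equation}
is identically $1$: your premises imply that no adjustment is needed, contradicting the proposition you are trying to prove. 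The replacements you then invoke --- turning the $t{-}1$ conditional ratio into $p_\theta(\rvx_{t-1})/p_\theta^\star(\rvx_{t-1})$ and the $t$ conditional ratio into $q^\star(\rvx_t)/q(\rvx_t)$ --- are not identities and do not follow from the mixture relation $p_\theta^\star(\rvx_t)=\sum_y p_\theta(\rvx_t\mid y)\,q^\star(y)$: marginalizing over $y$ cannot convert a $y$-conditional ratio into an unconditional one, the placement of the star flips between your two factors in a way nothing in your assumptions produces, and the data distribution $q$ never enters your within-model factorization, so the factor $q^\star(\rvx_t)/q(\rvx_t)$ has no way to appear.

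The missing idea is the asymmetric treatment of the two marginals. The paper starts from the classifier-guidance identity $\hat q(\rvx_{t-1}\mid\rvx_t,y)=q(\rvx_{t-1}\mid\rvx_t)\,\hat q(y\mid\rvx_{t-1})/\hat q(y\mid\rvx_t)$ and Bayes-inverts the two classifier terms, so the reverse conditional is expressed through the data class-conditionals $\hat q(\rvx_{t-1}\mid y)$, $\hat q(\rvx_t\mid y)$ and the marginals $\hat q(\rvx_t)$, $\hat q(\rvx_{t-1})$. The class-conditionals do not depend on the label prior and cancel when the starred equation is divided by the unstarred one; the marginal at step $t$ is kept as the data marginal ($\hat q(\rvx_t)=q(\rvx_t)$, which does depend on the label prior and contributes $q^\star(\rvx_t)/q(\rvx_t)$), whereas the marginal at step $t-1$ is approximated by the model marginal, $\hat q(\rvx_{t-1})\approx p_\theta(\rvx_{t-1})$, contributing $p_\theta(\rvx_{t-1})/p_\theta^\star(\rvx_{t-1})$. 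It is precisely this mixed data-marginal/model-marginal bookkeeping --- absent from your purely within-model decomposition --- that produces the stated schema; to repair your argument you would need to re-introduce the data distribution through these two identifications rather than assume identical per-class models, since the latter assumption collapses the adjustment to the trivial one.
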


\begin{proof}
The starting point for this derivation comes from \cite{guided}. By noting the conditional prior distribution given the image label as $\hat{q}$, we can write the reverse conditional probability $\hat{q}\left(\rvx_{t-1}\middle|\rvx_{t},y\right)$ as

\begin{equation}
    \hat{q}\left(\rvx_{t-1}\middle|\rvx_{t}, y\right)=\frac{q\left(\rvx_{t-1}\middle|\rvx_{t}\right)\hat{q}\left(y\middle|\rvx_{t-1}\right)}{\hat{q}\left(y\middle|\rvx_{t}\right)}
\end{equation}

With the Bayesian formula, the equation can be transformed into

\begin{align}
\hat{q}(\rvx_{t-1}|\rvx_{t}, y) &=\frac{q(\rvx_{t-1}|\rvx_{t})\hat{q}(y|\rvx_{t-1})}{\hat{q}(y|\rvx_{t})}\\
&=\frac{q(\rvx_{t-1}|\rvx_{t})\hat{q}(\rvx_{t-1}|y)\hat{q}(y)\hat{q}(\rvx_{t})}{\hat{q}(\rvx_{t}|y)\hat{q}(y)\hat{q}(\rvx_{t-1})}\\
&=\frac{q(\rvx_{t-1}|\rvx_{t})\hat{q}(\rvx_{t-1}|y)\hat{q}(\rvx_{t})}{\hat{q}(\rvx_{t}|y)\hat{q}(\rvx_{t-1})}
\end{align}

In the above equation, $q(\rvx_{t})$ is equal to $\hat{q}(\rvx_{t})$ according to the derivation of \cite{guided}. Moreover, since the conditional diffusion model is trained to fit a prior distribution with known conditions by definition, we can approximate $\hat{q}(\rvx_{t-1})$ with $p_\theta(\rvx_{t-1})$ in the following calculations. Thus, we have

\begin{align}
p_\theta(\rvx_{t-1}|\rvx_{t}, y)
&=\frac{q(\rvx_{t-1}|\rvx_{t})\hat{q}(\rvx_{t-1}|y)q(\rvx_{t})}{\hat{q}(\rvx_{t}|y)p_\theta(\rvx_{t-1})}
\label{eq:1.1}
\end{align}

We use $\star$ to stand for an optimal distribution, for instance, when the categories are evenly distributed and $\frac{q(y)}{p_\theta(y)}$ is correctly estimated. In this case, we have:

\begin{equation}
    p_{\theta}^{\star}(\rvx_{t-1}|\rvx_{t}, y)  =\frac{q(\rvx_{t-1}|\rvx_{t})\hat{q}^\star(\rvx_{t-1}|y)q^{\star}(\rvx_{t})}{\hat{q}^\star(\rvx_{t}|y)p_{\theta}^{\star}(\rvx_{t-1})}
\end{equation}

Here in \eqref{eq:1.1}, $\hat{q}\left(\rvx_{t}\middle|y\right)$ and $\hat{q}(\rvx_{t-1}|y)$ are conditional on the distribution of the label y and thus are not affected by the label distribution. Then, the only term that is under the influence of imbalanced label distribution is $p_\theta\left(\rvx_{t-1}\right)$ and $q\left(\rvx_{t}\right)$. For the optimal label distribution defined with $\star$, we follow the same deduction. Dividing the two equations yields:

\begin{equation}
    p_{\theta}^{\star}(\rvx_{t-1}|\rvx_{t}, y) = p_{\theta}(\rvx_{t-1}|\rvx_{t}, y) \frac{p_{\theta}(\rvx_{t-1})}{p_{\theta}^{\star}(\rvx_{t-1})}
    \frac{q^{\star}(\rvx_{t})}{q(\rvx_{t})}\label{eq:a.1}
\end{equation}

\end{proof}

\paragraph{From post-hoc adjustment to training with adjustment}
The above \eqref{eq:a.1} shows how to transform the original model estimation $p_{\theta}(\rvx_{t-1}|\rvx_{t}, y)$ into $p_{\theta}^{\star}(\rvx_{t-1}|\rvx_{t}, y)$ by adding the adjustment term  $\frac{p_{\theta}(\rvx_{t-1})}{p_{\theta}^{\star}(\rvx_{t-1})}\frac{q^{\star}(\rvx_{t})}{q(\rvx_{t})}$. 
Naturally, the most direct way to adjust the distribution is to implement this term in a post-hoc manner during sampling. 

First, we note that $\frac{p_\theta(\rvx_{t-1})}{p^{\star}_\theta(\rvx_{t-1})}$ cannot be obtained directly, but can be approximated by the conditional expectation of the model. For example, we can rewrite it as the conditional expectation of $p^{\star}_\theta(\rvx_{t-1}|\rvx_{t}, y)$ successively about the labels $y$ and $\rvx_{t}$, and later estimate it by Monte Carlo sampling.
However, the use of Monte Carlo sampling during image generation imposes much more computational burden, and the image generation will also be inaccurate due to the large estimation error caused by the difficulty to sample uniformly on $\rvx_{t}$ and on $y$. But the most problematic issue is that the adjustment term also contains the true probability of the data distribution $\frac{q^{\star}(\rvx_{t})}{q(\rvx_{t})}$, which obviously cannot be estimated. So the presence of this term makes the post hoc adjustment method impossible to implement. In contrast, following the idea of trainable logit adjustment in the long-tail recognition task\cite{menon2021logitadjustment}, we found that these two problems can be solved if $p_{\theta}$ is adjusted during training.

First, $\frac{q^{\star}(\rvx_{t})}{q(\rvx_{t})}$ is independent of model parameters, and can be thus neglected in the following calculation, which yields a simpler result:

\begin{equation}
    p_{\theta}^{\star}(\rvx_{t-1}|\rvx_{t}, y) = p_{\theta}(\rvx_{t-1}|\rvx_{t}, y) \frac{p_{\theta}(\rvx_{t-1})}{p_{\theta}^{\star}(\rvx_{t-1})}
\end{equation}

Second, similar to the logit adjustment in long-tail recognition, our first step in adjusting the distribution during training lies in reversing the sign before the adjustment term. Since we are considering the gradient of the log probability, changing the positive and negative sign is actually equivalent to reversing the adjustment term $\frac{p_{\theta}(\rvx_{t-1})}{p_{\theta}^{\star}(\rvx_{t-1})}$. Thus, we have instead:

\begin{equation}
    p_{\theta}^{\star}(\rvx_{t-1}|\rvx_{t}, y) = p_{\theta}(\rvx_{t-1}|\rvx_{t}, y) \frac{p_{\theta}^{\star}(\rvx_{t-1})}{p_{\theta}(\rvx_{t-1})}
\end{equation}

\begin{prop}
For the adjusted loss $\gL_{DM}^\star = \sum_{t=1}^T \gL_{t-1}^\star$, an upper-bound of the target training objective to calibrate at timestep $t$ (i.e. $\gL_{t-1}^\star$) can be derived as:
\begin{align}
\sum_{0 < t \leq T}\gL_{t-1}^{\star}& = \sum_{0 < t \leq T}\KL[q(\rvx_{t-1}| \rvx_t, \rvx_0)\ ||\ p_{\theta}^\star(\rvx_{t-1}|\rvx_{t}, y)] \notag \\
&\leq\sum_{0 < t \leq T} (\underbrace{\KL[q(\rvx_{t-1}| \rvx_t, \rvx_0)\ ||\ p_{\theta}(\rvx_{t-1}|\rvx_{t}, y)]}_{\text{Diffusion model loss } \gL_\text{DM}} \notag\\ 
& + \underbrace{t  \E_{y'\sim q^\star_\mathcal{Y}}[ \KL[p_{\theta}(\rvx_{t-1}|\rvx_{t})||p_{\theta}(\rvx_{t-1}|\rvx_{t}, y')]}_{\text{Distribution adjustment loss } \gL_\text{r}} ]) \notag,
\end{align}
\label{prop:loss}
\end{prop}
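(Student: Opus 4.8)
The plan is to start from the sign-flipped adjustment identity derived just above the statement of Proposition~\ref{prop:loss}, namely $p_{\theta}^{\star}(\rvx_{t-1}\mid\rvx_{t}, y) = p_{\theta}(\rvx_{t-1}\mid\rvx_{t}, y)\,\tfrac{p_{\theta}^{\star}(\rvx_{t-1})}{p_{\theta}(\rvx_{t-1})}$, and to feed it into the per-step objective $\gL_{t-1}^{\star} = \KL[q(\rvx_{t-1}\mid\rvx_t,\rvx_0)\,\|\,p_{\theta}^{\star}(\rvx_{t-1}\mid\rvx_t,y)]$. Writing the KL as an expectation of a log-ratio under $q(\rvx_{t-1}\mid\rvx_t,\rvx_0)$ and substituting the identity splits the objective \emph{exactly} into $\KL[q(\rvx_{t-1}\mid\rvx_t,\rvx_0)\,\|\,p_{\theta}(\rvx_{t-1}\mid\rvx_t,y)]$, which is the ordinary DDPM term $\gL_\text{DM}$, plus the residual $\E_{q(\rvx_{t-1}\mid\rvx_t,\rvx_0)}\!\left[\log\frac{p_{\theta}(\rvx_{t-1})}{p_{\theta}^{\star}(\rvx_{t-1})}\right]$. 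Everything then reduces to upper bounding this residual by $t\,\E_{y'\sim q^\star_\mathcal{Y}}\!\big[\KL[p_{\theta}(\rvx_{t-1}\mid\rvx_t)\,\|\,p_{\theta}(\rvx_{t-1}\mid\rvx_t,y')]\big]$, after which summing over $t$ gives the claim.

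For the residual I would first decompose the two $\rvx_{t-1}$-marginals. Since the $\star$ distribution is the one carrying a class-balanced label prior, write $p_{\theta}^{\star}(\rvx_{t-1}) = \E_{y'\sim q^\star_\mathcal{Y}}[\,p_{\theta}^{\star}(\rvx_{t-1}\mid y')\,]$ and invoke the same observation already used in the proof of Proposition~\ref{theo:adj} — that a conditional reverse kernel trained with known labels is essentially insensitive to the label prior — to replace $p_{\theta}^{\star}(\rvx_{t-1}\mid y')$ by $p_{\theta}(\rvx_{t-1}\mid y')$. Concavity of the logarithm (Jensen) then gives $\log\frac{p_{\theta}(\rvx_{t-1})}{\E_{y'}[p_{\theta}(\rvx_{t-1}\mid y')]}\le \E_{y'\sim q^\star_\mathcal{Y}}\!\left[\log\frac{p_{\theta}(\rvx_{t-1})}{p_{\theta}(\rvx_{t-1}\mid y')}\right]$, which pulls the balanced-label expectation outside the log, matching the outer structure of $\gL_\text{r}$.

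The remaining task is to turn the inner log-ratio of $\rvx_t$-marginals into the reverse-transition KL that appears in $\gL_\text{r}$. Here I would expand $p_{\theta}(\rvx_{t-1}) = \int p_{\theta}(\rvx_{t-1}\mid\rvx_t)\,p_{\theta}(\rvx_t)\,\mathrm{d}\rvx_t$ and $p_{\theta}(\rvx_{t-1}\mid y') = \int p_{\theta}(\rvx_{t-1}\mid\rvx_t,y')\,p_{\theta}(\rvx_t\mid y')\,\mathrm{d}\rvx_t$, use that at sizeable noise levels $p_{\theta}(\rvx_t\mid y')\approx p_{\theta}(\rvx_t)$, change the outer expectation from $q(\rvx_{t-1}\mid\rvx_t,\rvx_0)$ to $p_{\theta}(\rvx_{t-1}\mid\rvx_t)$ (legitimate near the optimum of $\gL_\text{DM}$), and apply Jensen once more to land on $\KL[p_{\theta}(\rvx_{t-1}\mid\rvx_t)\,\|\,p_{\theta}(\rvx_{t-1}\mid\rvx_t,y')]$; since both of these reverse kernels are Gaussians with the same covariance, this KL is, up to a constant, the squared difference of their means, i.e.\ the MSE form used in \eqref{eq:loss-distribution-adjustment} and in \Algref{xt1_algo}.

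I expect the genuinely delicate point to be this last step: passing from the single, global marginal ratio $\frac{p_{\theta}(\rvx_{t-1})}{p_{\theta}^{\star}(\rvx_{t-1})}$ to a per-timestep transition KL conditioned on a specific $\rvx_t$, and in particular producing the timestep prefactor $t$. That factor does not drop out of any single algebraic manipulation; it should emerge from accounting for how the marginal-level discrepancy accumulates along the reverse chain from $\rvx_T$ — where $p_\theta$ and $p_\theta^{\star}$ coincide — down to $\rvx_{t-1}$, together with the growth of the forward variance $1-\bar{\alpha}_t$ with $t$. I anticipate it is obtained through a deliberately loose bound rather than an exact identity, so the real content of the argument is selecting the approximations carefully enough that the accumulated error can be charged to $t$ times the one-step regularizer $\KL[p_{\theta}(\rvx_{t-1}\mid\rvx_t)\,\|\,p_{\theta}(\rvx_{t-1}\mid\rvx_t,y')]$.
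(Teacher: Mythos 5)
Your opening move coincides with the paper's: substituting the sign-flipped adjustment identity into the per-step KL splits it exactly into the DDPM term plus the residual $\E_q\!\left[\log\frac{p_{\theta}(\rvx_{t-1})}{p_{\theta}^{\star}(\rvx_{t-1})}\right]$. From there, however, there is a genuine gap, and you have named it yourself: none of your manipulations produces the prefactor $t$, and your guess that it emerges from a loose bound tied to the forward variance $1-\bar{\alpha}_t$ is not how it arises. In the paper the factor is purely combinatorial. The marginal ratio is expanded along the reverse chain, $p_{\theta}(\rvx_{t-1})=\E_{\rvx_{t+1:T}}\!\left[q(\rvx_T)\prod_{t<t'\leq T}p_\theta(\rvx_{t'-1}|\rvx_{t'})\right]$ (and likewise for $p_\theta^{\star}$); the ratio of these expectations is taken equal to the expectation of the ratio, and Jensen turns $-\log$ of the resulting product into $\sum_{t<t'\leq T}$ of one-step log-ratios $-\log\frac{p_\theta^{\star}(\rvx_{t'-1}|\rvx_{t'})}{p_\theta(\rvx_{t'-1}|\rvx_{t'})}$. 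Summing the per-step losses over $0<t\leq T$ and exchanging the double sum $\sum_{t}\sum_{t'>t}$ then makes the one-step term at step $t'$ appear exactly $t'$ times; that counting is the $t$ in $\gL_\text{r}$. Only afterwards is $p_\theta^{\star}(\rvx_{t-1}|\rvx_t)$ written as $\E_{y'\sim q^\star_\mathcal{Y}}[p_\theta(\rvx_{t-1}|\rvx_t,y')]$ and the expression passed, via one more Jensen/Monte-Carlo step, to $\E_{y'}\big[\KL[p_\theta(\rvx_{t-1}|\rvx_t)\,\|\,p_\theta(\rvx_{t-1}|\rvx_t,y')]\big]$.

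Your alternative handling of the residual also undermines itself: after decomposing over $y'$ at the marginal level and replacing $p_\theta^{\star}(\cdot|y')$ by $p_\theta(\cdot|y')$, you invoke $p_\theta(\rvx_t|y')\approx p_\theta(\rvx_t)$ at sizeable noise levels; but the same reasoning one index lower would make $\log\frac{p_\theta(\rvx_{t-1})}{p_\theta(\rvx_{t-1}|y')}$ approximately zero, i.e.\ it assumes away the very conditional-versus-unconditional discrepancy the regularizer is meant to measure, and at best it yields a single-step KL with no multiplicity $t$. The decisive missing idea is therefore the Markov-chain decomposition of the marginal ratio combined with the swap of the double summation; without it the stated bound, and in particular its $t$-weighting, is not established.
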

\noindent where $q^\star_\mathcal{Y}$ is the target label distribution to adjust, \textit{e.g.,} a class-balanced label distribution.

\begin{proof}\label{proof:prop2}
In the proposition, we admit that the adjustment weight (i.e. the regularization weight) is 1. In the deduction, we further denote $\tau$ as the regularization weight, which makes the adjusted probability becomes:
\begin{equation}
    p^{\star}_{\theta}(\rvx_{t-1}|\rvx_{t})=p_{\theta}(\rvx_{t-1}|\rvx_{t}, y)\frac{p_{\theta}^{\star}(\rvx_{t-1})^{\tau}}{p_{\theta}(\rvx_{t-1})^{\tau}}
\end{equation}

Thereafter, by bringing $p^{\star}_{\theta}$ into $L_{t-1}$ of the DDPM and simplifying the formula, we have:
\begin{align}
\sum_{0 < t \leq T}L_{t-1}^{\star}&=\mathbb{E}_q[-\sum_{0 < t \leq T}log\frac{p_{\theta}^{\star}(\rvx_{t-1}|\rvx_{t}, y)}{q(\rvx_{t-1}|\rvx_{t})}] 
\label{eq:2.1}\\
&= \mathbb{E}_q[-\sum_{0 < t \leq T}log\frac{p_{\theta}(\rvx_{t-1}|\rvx_{t}, y)}{q({\rvx_{t-1}|\rvx_{t}})}\frac{p_{\theta}^{\star}(\rvx_{t-1})^{\tau}}{p_{\theta}(\rvx_{t-1})^{\tau}}]\label{eq:2.2}\\
&= \mathbb{E}_q[-\sum_{0 < t \leq T}(log\frac{p_{\theta}(\rvx_{t-1}|\rvx_{t}, y)}{q({\rvx_{t-1}|\rvx_{t}})}-log\frac{p_{\theta}^{\star}(\rvx_{t-1})^{\tau}}{p_{\theta}(\rvx_{t-1})^{\tau}})]\label{eq:2.3}\\
&= \mathbb{E}_q[-\sum_{0 < t \leq T}log\frac{p_{\theta}(\rvx_{t-1}|\rvx_{t}, y)}{q({\rvx_{t-1}|\rvx_{t}})}]+\tau\mathbb{E}_q[-\sum_{0 < t \leq T}log\frac{p_{\theta}^{\star}(\rvx_{t-1})}{p_{\theta}(\rvx_{t-1})}]\label{eq:2.4}
\end{align}

In the above derivation: 
from \eqref{eq:2.2} to equation \eqref{eq:2.3}, we split the product in $log$ into the summation; from  \eqref{eq:2.3} to \eqref{eq:2.4}, the summation is apportioned into two terms, after which the exponent $\tau$ in $log$ is extracted.
We note that the expectation of the divisor of the two probabilities in the log-likelihood is equal to the KL divergence of both, i.e:

\begin{equation}
    \mathbb{E}_q[-log\frac{p_{\theta}(\rvx_{t-1}|\rvx_{t}, y)}{q({\rvx_{t-1}|\rvx_{t}})}] = D_{KL}[q(\rvx_{t-1}|\rvx_{t}, \rvx_{0})\ ||\ p_{\theta}(\rvx_{t-1}|\rvx_{t}, y)]
\end{equation}

Thus, we have:
 
\begin{align}
\sum_{0 < t \leq T}L_{t-1}^{\star}&=\mathbb{E}_q[-\sum_{0 < t \leq T}log\frac{p_{\theta}^{\star}(\rvx_{t-1}|\rvx_{t}, y)}{q(\rvx_{t-1}|\rvx_{t})}]\label{eq:3.1}\\
& = \sum_{0 < t \leq T} D_{KL}[q(\rvx_{t-1}|\rvx_{t}, \rvx_{0})\ ||\ p_{\theta}(\rvx_{t-1}|\rvx_{t}, y)] + \tau\sum_{0 < t \leq T}\mathbb{E}_{q}[-log\frac{p_{\theta}^{\star}(\rvx_{t-1})}{p_{\theta}(\rvx_{t-1})}]\label{eq:3.2}\\
&= \sum_{0 < t \leq T}D_{KL}[q(\rvx_{t-1}|\rvx_{t}, \rvx_{0})\ ||\ p_{\theta}(\rvx_{t-1}|\rvx_{t}, y)] + \tau\sum_{0 < t \leq T}\mathbb{E}_{q}[-\sum_{ t'\geq t}log\frac{p_{\theta}^{\star}(\rvx_{t'-1}|\rvx_{t'})}{p_{\theta}(\rvx_{t'-1}|\rvx_{t'})}]\label{eq:3.3}\\
&\leq \sum_{0 < t \leq T}D_{KL}[q(\rvx_{t-1}|\rvx_{t}, \rvx_{0})\ ||\ p_{\theta}(\rvx_{t-1}|\rvx_{t}, y)] + \tau\sum_{0 < t \leq T}t\mathbb{E}_{q}[-log\frac{p_{\theta}^{\star}(\rvx_{t-1}|\rvx_{t})}{p_{\theta}(\rvx_{t-1}|\rvx_{t})}]\label{eq:3.4}\\
&= \sum_{0 < t \leq T} D_{KL}[q(\rvx_{t-1}|\rvx_{t}, \rvx_{0})\ ||\ p_{\theta}(\rvx_{t-1}|\rvx_{t}, y)] + \tau t D_{KL}(p_{\theta}(\rvx_{t-1}|\rvx_{t})||p_{\theta}^{\star}(\rvx_{t-1}|\rvx_{t}))\label{eq:3.5}\\
&= \sum_{0 < t \leq T} (\underbrace{\KL[q(\rvx_{t-1}| \rvx_t, \rvx_0)\ ||\ p_{\theta}(\rvx_{t-1}|\rvx_{t}, y)]}_{\text{Diffusion model loss } \gL_\text{DM}} + \underbrace{t  \E_{y'\sim q^\star_\mathcal{Y}}[ \KL[p_{\theta}(\rvx_{t-1}|\rvx_{t})||p_{\theta}(\rvx_{t-1}|\rvx_{t}, y')]}_{\text{Distribution adjustment loss } \gL_\text{r}} ])
\end{align}
 
where, from \eqref{eq:3.1} to \eqref{eq:3.2}, we rewrite the first term in the form of KL divergence. From  \eqref{eq:3.2} to  \eqref{eq:3.3}, we decompose $p_{\theta}(\rvx_{t-1})$ into $\mathbb{E}_{\rvx_{t+1:T}}[q(\rvx_{T})\Pi_{t< t' \leq T}p_\theta(\rvx_{t'-1}|\rvx_{t'})]$ in the form of a Markov chain, while we suppose that $\frac{\mathbb{E}_{\rvx_{t+1:T}}[q(\rvx_{T})\Pi_{t< t' \leq T}p_\theta^\star(\rvx_{t'-1}|\rvx_{t'})]}{\mathbb{E}_{\rvx_{t+1:T}}[q(\rvx_{T})\Pi_{t<t' \leq T}p_\theta(\rvx_{t'-1}|\rvx_{t'})]}=\mathbb{E}_{\rvx_{t+1:T}}[\frac{q(\rvx_{T})\Pi_{t< t' \leq T}p_\theta^\star(\rvx_{t'-1}|\rvx_{t'})}{q(\rvx_{T})\Pi_{t<t' \leq T}p_\theta(\rvx_{t'-1}|\rvx_{t'})}]$ to ease calculation. Then the continuous multiplication is converted into the form of a logit sum; thereafter, we shift the summation symbols inside the expectation outward and rewrite the expectation again into the form of KL divergence. For the derivation of  \eqref{eq:3.2} to  \eqref{eq:3.3}, we further simplify this equation through Jensen's inequality as follows:

\begin{align}
    \mathbb{E}_q[-log\frac{p_\theta^{\star}(\rvx_{t-1})}{p_\theta(\rvx_{t-1})}] &\sim \mathbb{E}_q[-log(\mathbb{E}_{\rvx_{t+1:T}}[\frac{q(\rvx_{T})\Pi_{t< t' \leq T}p_\theta^\star(\rvx_{t'-1}|\rvx_{t'})}{q(\rvx_{T})\Pi_{t<t' \leq T}p_\theta(\rvx_{t'-1}|\rvx_{t'})}])\\
    &\leq\mathbb{E}_q[\mathbb{E}_{\rvx_{t+1:T}}[-log\frac{q(\rvx_{T})\Pi_{t< t' \leq T}p_\theta^{\star}(\rvx_{t'-1}|\rvx_{t'})}{q(\rvx_{T})\Pi_{t< t' \leq T}p_\theta(\rvx_{t'-1}|\rvx_{t'})}]] \\
    &=\mathbb{E}_q[-log\frac{\Pi_{t< t' \leq T}p_\theta^{\star}(\rvx_{t'-1}|\rvx_{t'})}{\Pi_{t< t' \leq T}p_\theta(\rvx_{t'-1}|\rvx_{t'})}] \\
    &=\mathbb{E}_q[-\sum_{t< t' \leq T}log\frac{p_\theta^{\star}(\rvx_{t'-1}|\rvx_{t'})}{p_\theta(\rvx_{t'-1}|\rvx_{t'})}] \\    &=\sum_{0\leq t < T}\mathbb{E}_{q_t}[-\sum_{t< t' \leq T}log\frac{p_\theta^{\star}(\rvx_{t'-1}|\rvx_{t'})}{p_\theta(\rvx_{t'-1}|\rvx_{t'})}] \\    &=\mathbb{E}_{q_{t}}[-\sum_{0\leq t < T}\sum_{t< t' \leq T}log\frac{p_\theta^{\star}(\rvx_{t'-1}|\rvx_{t'})}{p_\theta(\rvx_{t'-1}|\rvx_{t'})}] \\
    &=\mathbb{E}_{q}[-\sum_{0< t \leq T}t\cdot log\frac{p_\theta^{\star}(\rvx_{t-1}|\rvx_{t})}{p_\theta(\rvx_{t-1}|\rvx_{t})}] \\
    &=\sum_{0 < t \leq T}t\mathbb{E}_q[-log\frac{p_\theta^{\star}(\rvx_{t-1}|\rvx_{t})}{p_\theta(\rvx_{t-1}|\rvx_{t})}] \\
\end{align}

When $\tau=1$, we obtain the proposition itself; else, we obtain the CBDM algorithm with the hyper-parameter regularization weight $\tau$.
\end{proof}

\paragraph{Loss function}

Based on the above propositions, we derive the final loss form as follows:
\begin{equation}
    \gL_{CBDM}=\gL_{DM}+\gL_{r}
\end{equation}

where:
\begin{eqnarray}
\left\{
\begin{aligned}
	\gL_{DM}(t)& = &D_{KL}(q(\rvx_{t-1}|\rvx_{t}, \rvx_{0})\ ||\ p_{\theta}(\rvx_{t-1}|\rvx_{t},y))\\
	\gL_{r}(t)& = & t  \E_{y'\sim q^\star_\mathcal{Y}}[ \KL[p_{\theta}(\rvx_{t-1}|\rvx_{t},y)||p_{\theta}(\rvx_{t-1}|\rvx_{t}, y')]
\end{aligned}
\right.
\end{eqnarray}

Again, the form of $\gL_{DM}$ is exactly the same as the original loss in DDPM or other common diffusion models, so it is called $\gL_{DM}$. The second term acts similarly to the regularization and is therefore called $\gL_{r}$. Now, we simplify $\gL_{r}$ by first writing $p_\theta^{\star}(\rvx_{t-1}|\rvx_{t})$ as the expectation of the conditional probability about $y'$ realized through a simple Monte-Carlo sampling, and
note the optimal distribution that we want to approximate as $q^\star_\mathcal{Y}$, we have:

\begin{equation}
    \gL_\mathrm{{r}} (\rvx_{t}, y, t) = \E_{y'\sim q^\star_\mathcal{Y}}[t||\rvepsilon_\theta(\rvx_{t}, y) - \rvepsilon_\theta(\rvx_{t}, y')||^2], \label{eq:loss-distribution-adjustment}
\end{equation}

We note that the method can actually be combined with any diffusion model by simply adding the loss $\gL_{r}$ to the DDPM algorithm.
The method can adjust the distribution naturally in the iterations of the training process. Therefore, this method avoids the problem of time-consuming Monte Carlo sampling estimation of the conditional expectation in the post-hoc adjustment method, and it also avoids the problem of estimating the true prior distribution $q^\star(\rvx_{t})$ by directly eliminating terms that are unrelated to the model parameters.

\section{Additional Qualitative Results}

\begin{figure}[t]
    \centering
    \begin{subfigure}[b]{0.49\linewidth}
         \centering
         \includegraphics[width=0.99\linewidth]{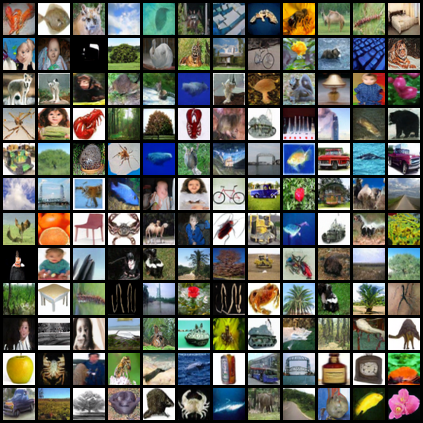}
         \caption{DDPM}
    \end{subfigure}
    \hfill
    \begin{subfigure}[b]{0.49\linewidth}
         \centering
         \includegraphics[width=0.99\linewidth]{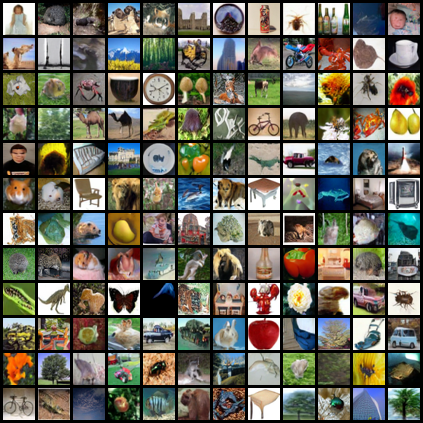}
         \caption{CBDM}
    \end{subfigure}\vspace{-3mm}
    \caption{Image generation visualization for the CIFAR100LT dataset}
    \label{fig:cifar100lt}
\end{figure}

\begin{figure}[t]
    \centering
    \begin{subfigure}[b]{0.49\linewidth}
         \centering
         \includegraphics[width=0.9\linewidth]{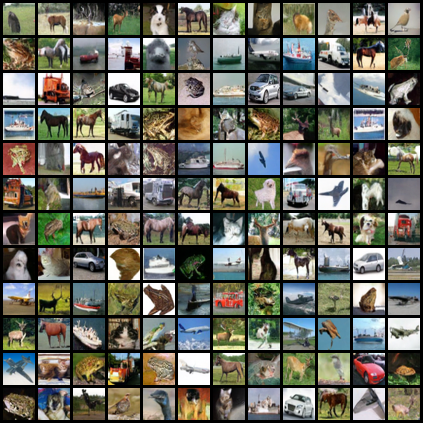}
         \caption{DDPM}
    \end{subfigure}
    \hfill
    \begin{subfigure}[b]{0.49\linewidth}
         \centering
         \includegraphics[width=0.9\linewidth]{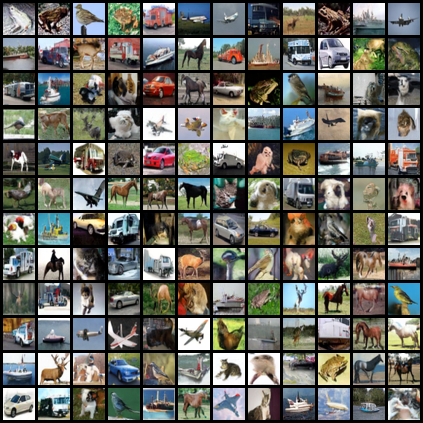}
         \caption{CBDM}
    \end{subfigure}\vspace{-3mm}
    \caption{Image generation visualization for the CIFAR10LT dataset}
    \label{fig:cifar10lt}
\end{figure}

\begin{figure}[h]
    \centering
    \begin{subfigure}[b]{0.53\linewidth}
         \centering
         \includegraphics[width=0.9\linewidth]{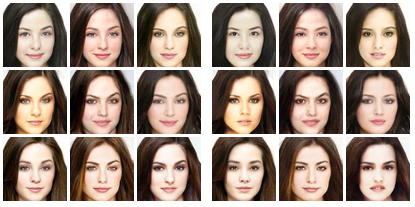}
         \caption{DDPM(left)/CBDM(right) comparison}
    \end{subfigure}
    \hfill
    \begin{subfigure}[b]{0.46\linewidth}
         \centering
         \includegraphics[width=0.95\linewidth]{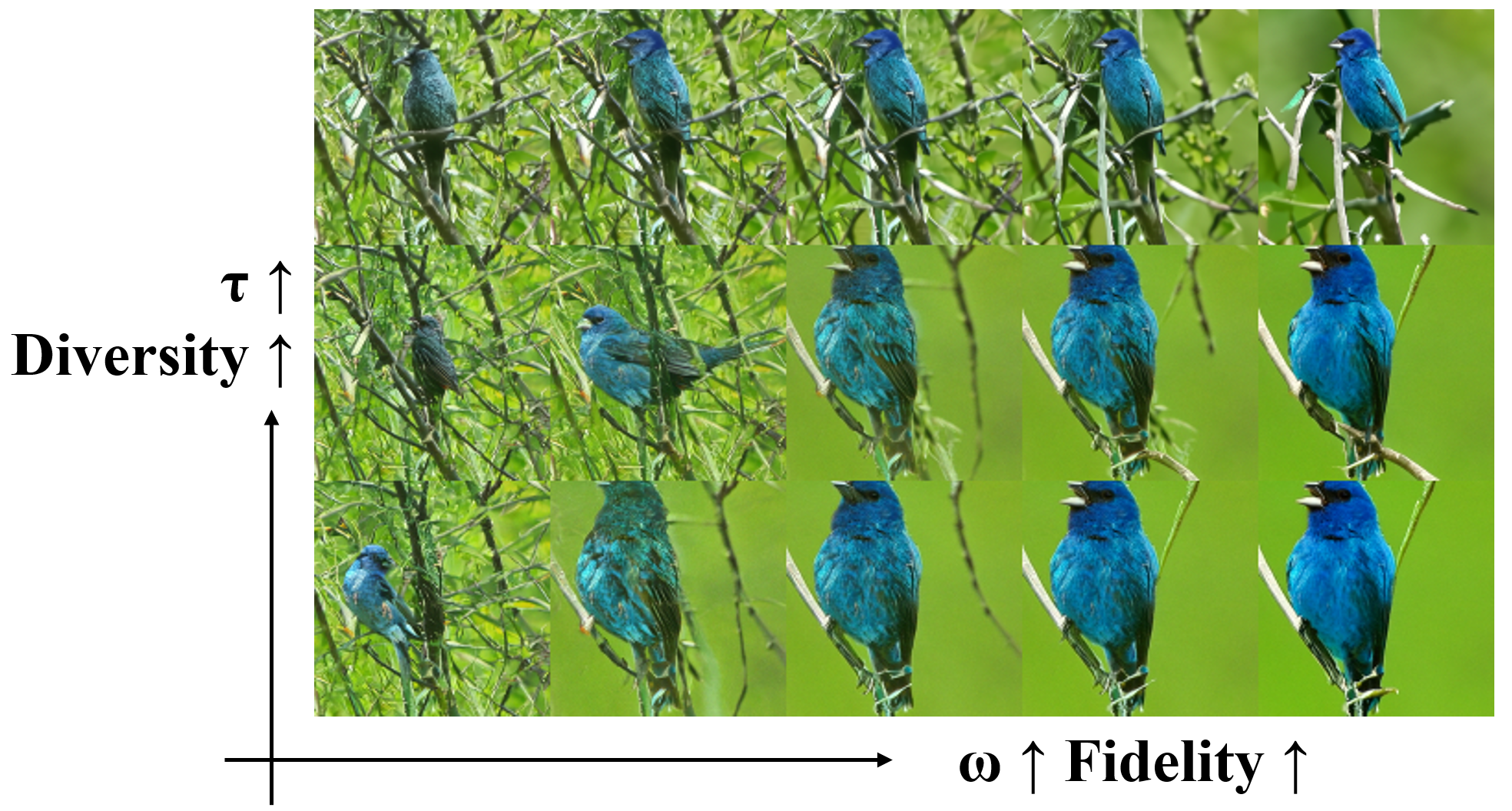}
         \caption{Controllability on CUB dataset}
    \end{subfigure}
    \caption{(a) DDPM(left)/CBDM(right) comparison when denoising a same noised image in CelebA-5.
    (b) An exemplar about the fidelity and diversity control guided by strength $\omega$ and regularization weight $\tau$ for generators trained on the CUB dataset.
    }
    \vspace{-0.3cm}
    \label{fig:newfid}
\end{figure}

\paragraph{Mode collapse caused by an inappropriate $\mathcal{Y}$}

In our experiments, we found that using the theoretically optimal sampling set $\mathcal{Y}^{bal}$ often leads to mode collapse in tail class images, thus we used a more imbalanced label set $\gY$. We visualize the results under different label set $\gY$ in order to better explain this issue. 
In \Figref{fig:sqrtreg}, it can be observed that an imbalanced set $\gY^{train}$ ($\gY^{lt}$) demonstrates a better diversity while preserving the original class information of class 62. On the contrary, $\gY^{sqrt}$ and $\gY^{bal}$ demonstrate a more and more severe mode collapse issue when the set becomes more balanced.

\begin{figure}[H]
    \centering
    \begin{subfigure}[b]{0.24\linewidth}
         \centering
         \includegraphics[width=0.9\linewidth]{62_cifar100lt_free.png}
         \caption{DDPM}
    \end{subfigure}
    \hfill
    \begin{subfigure}[b]{0.24\linewidth}
         \centering
         \includegraphics[width=0.9\linewidth]{62_cifar100lt_reg.png}
         \caption{CBDM($\mathcal{Y}^{train}$)}
    \end{subfigure}
    \hfill
    \begin{subfigure}[b]{0.24\linewidth}
         \centering
         \includegraphics[width=0.9\linewidth]{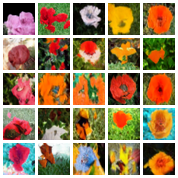}
         \caption{CBDM($\mathcal{Y}^{sqrt}$)}
    \end{subfigure}
    \hfill
    \begin{subfigure}[b]{0.24\linewidth}
         \centering
         \includegraphics[width=0.9\linewidth]{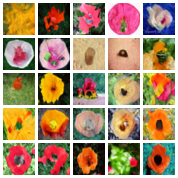}
         \caption{CBDM($\mathcal{Y}^{bal}$)}
    \end{subfigure}
    \caption{Comparison of image generation results on body class (62) between different label set. Image generated by DDPM is also shown for comparison.
    }
    \label{fig:sqrtreg}
\end{figure}

\section{Additional Experimental Results}

\paragraph{Performance on larger datasets}

\begin{table}[h]
    \centering
    \tiny
    \resizebox{0.75\columnwidth}{!}{
      \begin{tabular}{l|l|l|l|cccc}
        \toprule
         Dataset  & Model & {$\omega$} & $\tau$ & \textbf{FID}$\downarrow$ &  $\bm{F_{8}}$ $\uparrow$ & \textbf{Recall} $\uparrow$ & IS$\uparrow$  \\
        \midrule
        CUB\cite{cub} & DDPM & {0.2} & - & 8.34 & 0.91  & 0.70 & 5.34 \\
        \rowcolor{Gray}
        \cellcolor{white}& CB-DDPM (ours) & {0.4} & 0.1 & \textbf{8.23}  & \textbf{0.92} & \textbf{0.70} & \textbf{5.36}  \\
        \midrule
        CelebA-5\cite{Kim_2020_CVPR_celeba5} & DDPM & {0.6} & - & 10.68   & 0.92 & 0.51 & 2.22 \\
        \rowcolor{Gray}
        \cellcolor{white}& CBDM (ours) & {1.0} & 0.05  & \textbf{8.69}  & \textbf{0.94} &\textbf{0.57} & \textbf{2.26} \\
        \midrule
        ImageNet-LT\cite{cub} & DDPM & {1.2} & - & 17.4 & 0.93 & \textbf{0.33} & 25.4 \\
        \rowcolor{Gray}
        \cellcolor{white}& CB-DDPM (ours) & {1.6} & 0.01  & \textbf{16.3} & \textbf{0.93} & 0.26 & \textbf{40.3} \\
        \bottomrule
        \end{tabular}}
        \caption{\small CBDM performance on high-resolution datasets. Here, CB-DDPM refers to the DDPM model fine-tuned by our method, $\omega$ refers to the guidance strength and $\tau$ refers to the weight of the regularization term. We note than the fine-tuning is applied on CUB and ImageNet-LT due to the limited calculation budget.}
      \label{tab:high_res}
\end{table}

We also investigated our methods' performance on commonly encountered datasets with higher resolution: CUB-200\cite{cub} (of resolution $128$), CelebA-5\cite{Kim_2020_CVPR_celeba5} (of resolution $64$) and ImageNet-LT\cite{openlongtailrecognition} (of resolution $64$).
Note that, the sampling size for evaluation is based on its corresponding training set (except Imagenet-LT, which uses 50k samples for evaluation) size for correctly calculating some metrics. From the results in \Tabref{tab:high_res}, we demonstrate CBDM and the fine-tuned model are consistently better than DDPM except on ImageNet-LT.
ImageNet-LT has lower Recall when using CBDM, which may be attributed to the limited size of its evaluation dataset (50k images). In contrast, its FID and IS metrics are derived from the much larger training set statistics, making them more reliable.
Also, similar to our observation before, the improvement on the imbalanced dataset (CelebA-5 \& Imagenet-LT) is more obvious than on the balanced dataset (CUB). 
We noticed that the regularization weight should be chosen with more caution when training models with larger resolution in order to avoid potential mode collapse. As marked in \Tabref{tab:high_res}, we use some small values such as $0.1$ and $0.01$ for experiments.

\paragraph{Trainable classifier-free guidance (TCFG)}

One drawback of classifier-free guidance (CFG) lies in its sampling speed. As CFG requires calling the model both in the unconditional and conditional case, CFG doubles the time complexity of common diffusion models during the sampling stage. Therefore, we tried to solve this issue by adding another conditional layer in the backbone to encode the guidance strength information. Precisely, we sample randomly the guidance strength $\omega$ and add another two loss terms to the model:

\begin{align}
&\gL_{g}(\rvx_{t}, y)
=||[\rvepsilon_{\theta}(\rvx_{t} ,y, \omega) - \omega(\rvepsilon_{\theta}(\rvx_{t} ,y)-\rvepsilon_{\theta}(\rvx_{t-1})).sg()]-\rvepsilon||^2\\
&
\gL_{gc}(\rvx_{t}, y)
=\frac{1}{4}||[\rvepsilon_{\theta}(\rvx_{t} ,y, \omega).sg() - \omega(\rvepsilon_{\theta}(\rvx_{t} ,y)-\rvepsilon_{\theta}(\rvx_{t-1}))]-\rvepsilon||^2
\end{align}

Those two losses decrease significantly the sampling time at the price of a longer training time. 
\Tabref{fig:TCFGcifar100} shows the comparison of using CFG and TCFG for sampling. We trained the DDPM as well as the TCFG model on the CIFAR100 dataset respectively with the same number of iterations for both.
Given the large number of parameters to be tested for visualization, we generated 10k images for each setting and tested 6 guidance strength within 0 to 1.

\begin{figure}[h]
    \centering
    \begin{subfigure}[b]{0.49\linewidth}
         \centering
         \includegraphics[width=0.75\linewidth]{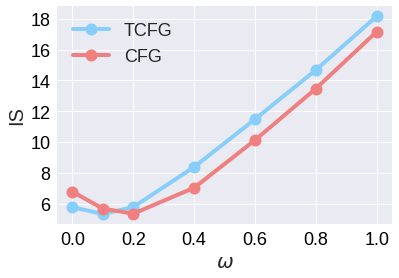}
         \caption{FID}
    \end{subfigure}
    \hfill
    \begin{subfigure}[b]{0.49\linewidth}
         \centering
         \includegraphics[width=0.75\linewidth]{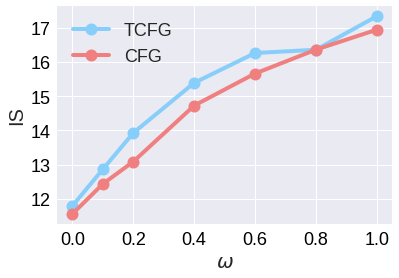}
         \caption{IS}
    \end{subfigure}
    \caption{CFG/TCFG comparison with an embedded $\omega$
    }
      \label{fig:TCFGcifar100}
\end{figure}

The \Figref{fig:TCFGcifar100} shows that the FID and IS scores of TCF and TCFG are quite comparable. Although TCFG shows a slightly higher FID compared to TCF, it demonstrates a slightly better IS value, which implies that TCFG prioritizes improving the image quality over diversity. However, if TCFG is shifted by 0.1 units, the curves display a high degree of similarity with CFG. This shift occurs because TCFG training utilizes all backbone parameters to train a model under difference guidance, which makes the model's parameters move closer to the guided generation results. Additionally, when the guidance strength of TCFG is set to 0.1, the FID score of TCFG ($\omega$=0.1, FID=5.343) is slightly better than that of the CFG approach at the optimal guidance strength ($\omega$=0.2, FID=5.357). In conclusion, the TCFG method demonstrates a high degree of similarity to the CFG method but maintained the sampling speed equivalent to the unguided generation. Moreover, the TCFG method is not difficult to implement and can be combined with any diffusion model using CFG guidance. However, the primary challenge would be the reduction in the training speed. For time limit, we only conduct TCFG experiments on CIFAR100 dataset, but we encourage testing this trick in practice when the sampling speed is an important issue.

\paragraph{Unconditional training probability}

Another hyperparameter in CBDM worth exploring is the probability of unconditional generation involved in CFG. According to the original paper, we performed unconditional generation with a probability of 10\% and conditional generation with a probability of 90\%. In our experiments, we increased the value of this probability and tried $20\%$ and $50\%$ as the new unconditional generation probabilities. As shown in the following table \Tabref{tab:phi}, we find that the performance of CBDM is optimal when $\phi$ equals to 0.1.

\begin{table*}[h]
  \centering
  \begin{tabular}{l|ll}
    \toprule
     $\phi$ & FID & IS  \\
    \midrule
    $10\%$  & 8.299&12.457\\
    $20\%$ & 9.161&12.261\\
    $50\%$  & 10.392&11.510\\
    \bottomrule
  \end{tabular}
    \caption{Influence of $\phi$ to model performance}
  \label{tab:phi}
\end{table*}

\section{Additional Discussions}

\paragraph{Relationship between Classifier-Free Guidance and Logit Adjustment}
It is not difficult to see that the structure of the adjustment form of CBDM and logit adjustment is similar. 
In classification tasks, logit adjustment can be implemented in the form of post-hoc adjustment or in the form of training. Precisely, by noting the classifier as $f$, the frequency of class $y$ as $\pi(y)$ and the adjustment weight as $\tau$, we have:

\begin{itemize}
    \item Post-hoc: $f^\star_{y}(x)=f_{y}(x) + \tau\pi (y)$
    \item Training: $f^\star_{y}(x)=f_{y}(x) - \tau\pi (y)$
\end{itemize}

Similarly, for the diffusion model, we reverse the adjustment term as in Prop. (\ref{theo:adj}) in order to realize a training adjustment, and we have:

\begin{itemize}
    \item Post-hoc: $\rvepsilon_\theta(\rvx_{t-1}, y) = \rvepsilon_\theta(\rvx_{t-1}, y) + \tau \rvepsilon_{adj}$
    \item Training: $\rvepsilon_\theta(\rvx_{t-1}, y) = \rvepsilon_\theta(\rvx_{t-1}, y) - \tau \rvepsilon_{adj}$
\end{itemize}

Although the idea of this adjustment is very simple, it is convenient to transfer some cumbersome post-hoc adjustment from sampling to training.

\end{document}